\newcommand{\w}{\mathbf{w}}
\newcommand{\x}{\mathbf{x}}
\newcommand{\Gsub}{G_{\rm Sub}}
\newcommand{\GRR}{G_{\rm RR}}
\begin{document}

\title{Direct Learning to Rank and Rerank} 
\titlerunning{On Direct Learning to Rank and Rerank}
\author{Cynthia Rudin \and Yining Wang
}

\begin{center}
\vspace*{5pt}
\Large \textbf{Direct Learning to Rank and Rerank\footnote{Longer version (supplement) for AISTATS 2018 paper}}\vspace*{5pt}\\
\vspace*{5pt}
\normalsize \textbf{Cynthia Rudin, Duke University}\\
\normalsize \textbf{Yining Wang, Carnegie Mellon University}\\
\end{center}

\normalsize
\institute{Cynthia Rudin \at
Department of Computer Science \& Department of Electrical and Computer Engineering\\
  Duke University\\
  USA\\
  \email{cynthia@cs.duke.edu}
\and
Yining Wang \at
Machine Learning Department \\
School of Computer Science \\ 
Carnegie Mellon University \\
\email{ynwang.yining@gmail.com}
}


\begin{abstract}
Learning-to-rank techniques have proven to be extremely useful for prioritization problems, where we rank items in order of their estimated probabilities, and dedicate our limited resources to the top-ranked items. This work exposes a serious problem with the state of learning-to-rank algorithms, which is that they are based on convex proxies that lead to poor approximations. We then discuss the possibility of  ``exact" reranking algorithms based on mathematical programming. We prove that a relaxed version of the ``exact" problem has the same optimal solution, and provide an empirical analysis. 
\end{abstract}

\begin{keywords}
{learning to rank, reranking, supervised ranking, mixed-integer programming, rank statistics, discounted cumulative gain, preference learning}
\end{keywords}

\section{Introduction}
\label{intro}
We are often faced with prioritization problems -- how can we rank aircraft in order of vulnerability to failure? How can we rank patients in order of priority for treatment? When we have limited resources and need to make decisions on how to allocate them, these ranking problems become important. The quality of a ranked list is often evaluated in terms of \textit{rank statistics}. The area under the receiver operator characteristic curve ~\citep[AUC,][]{Metz78,Bradley97}, which counts pairwise comparisons, is a rank statistic, but it does not focus on the top of a ranked list, and is not a good evaluation measure if we care about prioritization problems. For prioritization problems, we would use rank statistics that focus on the top of the ranked list, such as a weighted area under the curve that focuses on the left part of the curve. Then, since we evaluate our models using these rank statistics, we should aim to optimize them out-of-sample by optimizing them in-sample. The learning-to-rank field (also called supervised ranking) is built from this fundamental idea. Learning-to-rank is a natural fit for many prioritization problems. If we are able to improve the quality of a prioritization policy by even a small amount, it can have an important practical impact. Learning-to-rank can be used to prioritize mechanical equipment for repair \cite[e.g., airplanes, as considered by][]{ozaEtAl09}, it could be useful for prioritizing maintenance on the power grid \citep{RudinEtAl12,Rudin10}, it could be used for ranking medical workers in order of likelihood that they accessed medical records inappropriately \citep[as considered by][]{MenonEtAl13}, prioritizing safety inspections or lead paint inspections in dwellings \citep{Ghani2015}, ranking companies in order of likeliness of committing tax violations \citep[see][]{KongSa13}, ranking water pipes in order of vulnerability \citep[as considered by][]{LiEtAl13}, other areas of information retrieval \citep{Xu07aboosting,CaoETAL07,Matveeva06highaccuracy,Lafferty01documentlanguage,Li_mcrank:learning} and in almost any domain where one measures the quality of results by rank statistics. 
Learning-to-rank algorithms have been used also in sentiment analysis \citep{KesslerNi09}, natural language processing \citep{Ji06, Collins03}, image retrieval \citep{JainVa11,KangEtAl11}, and reverse-engineering product quality rating systems \citep{ChangEtAl2012}. 

This work exposes a serious problem with the state of learning-to-rank algorithms, which is that they are based on convex proxies for  rank statistics, and when these convex proxies are used, computation is faster but the quality of the solution can be poor. 

We then discuss the possibility of more direct optimization of rank statistics for predictive learning-to-rank problems. In particular, we consider a strategy of ranking with a simple ranker (logistic regression for instance) which is computationally efficient, and then reranking only the candidates near the top of the ranked list with an ``exact" method. The exact method does not have the shortcoming that we discussed earlier for convex proxies. 

For most ranking applications, we care only about the top of the ranked list; thus, as long as we rerank enough items with the exact method, the re-ranked list is (for practical purposes) just as useful as a full ranked list would be (if we could compute it with the exact method, which would be computationally prohibitive).

The best known theoretical guarantee on ranking methods is obtained by directly optimizing the rank statistic of interest \citep[as shown by theoretical bounds of][for instance]{Clemencon08, RudinSc09}
hence our choice of methodology  -- mixed-integer programming (MIP) -- for reranking in this work. Our general formulation can optimize any member of a large class rank statistics using a single mixed-integer \textit{linear} program. 
Specifically, we can handle (a generalization of) the large class of \textit{conditional linear rank statistics}, which includes the Wilcoxon-Mann Whitney U statistic, or equivalently the Area Under the ROC Curve, 
the Winner-Take-All statistic, the Discounted Cumulative Gain used in information retrieval~\citep{Jarvelin00}, and the Mean Reciprocal Rank. 

Exact learning-to-rank computations need to be performed carefully; we should not refrain from solving hard problems, but certain problems are harder than others. We provide two MIP formulations aimed at the same ranking problems.  The first one works no matter what the properties of the data are. The second formulation is much faster, and is theoretically shown to produce the \textit{same} quality of result as the first formulation when there are no duplicated observations.
Note that if the observations are chosen from a continuous distribution then duplicated observations do not occur, with probability one.

One challenge in the exact learning-to-rank formulation is the way of handling ties in score. As it turns out, the original definition of conditional linear rank statistics can be used for the purpose of evaluation but not optimization. We show that a small change to the definition can be used for optimization.

This paper differs from our earlier technical report and non-archival conference paper \citep{BertsimasChRuOR38811,BerChaRud2010}, which were focused on solving full problems to optimality, and did not consider reranking or regularization; our exposition for the formulations closely follows this past work. The technique was used by \citet{ChangEtAl2012} for the purpose of reverse engineering product rankings from rating companies that do not reveal their secret rating formula.

Section~\ref{background} of this paper introduces ranking and reranking, introduces the class of conditional linear rank statistics that we work with, and provides background on some current approximate algorithms for learning-to-rank. It also provides an example to show how ranked statistics can be ``washed out" when they are approximated by convex substitutes. 
Section~\ref{background}
also discusses a major difference between approximation methods and exact methods for optimizing rank statistics, which is how to handle ties in rank. As it turns out, we cannot optimize conditional linear rank statistics without changing their definition: a tie in score needs to be counted as a mistake. 
Section~\ref{formulations} provides the two MIP formulations for ranking, and Section~\ref{sectionrelationship} contains a proof that the second formulation is sufficient to solve the ranking problem provided that no observations are duplicates of each other. Then follows an empirical discussion in Section \ref{sectionexperiments}, designed to highlight the tradeoffs in the quality of the solution outlined above. Appendix \ref{AUCopt} contains a MIP formulation for regularized AUC maximization, and Appendix \ref{Pairwiseopt} contains a MIP formulation for a general (non bipartite) ranking problem. 


The recent work most related to ours are possibly those of \citet{Ataman06} who proposed a ranking algorithm to maximize the AUC using linear programming, and \citet{Brooks10}, who uses a ramp loss and hard margin loss rather than a conventional hinge loss, making their method robust to outliers, within a mixed-integer programming framework. The work of \citet{tan2013direct} uses a non-mathematical-programming coordinate ascent approach, aiming to approximately optimize the exact ranking measures, for large scale problems. 
There are also algorithms for \textit{ordinal regression}, which is a related but different learning problem \citep{Li_mcrank:learning, crammer2001pranking,herbrich1999large}, and \textit{listwise} approaches to ranking \citep{CaoETAL07,xia2008listwise,xu2007adarank,yue2007support}. 


\section{Learning-to-Rank and Learning-To-Rerank}
\label{background}

We first introduce \textit{learning-to-rank}, or \textit{supervised bipartite ranking}. The training data are labeled observations $\{(\x_i,y_i)\}_{i=1}^n$, with observations $\x_i\in X\subset\mathcal{R}^d$ and labels $y_i\in\{0,1\}$ for all $i$. The observations labeled ``1" are called ``positive observations," and the observations labeled ``0" are ``negative observations." There are $n_+$ positive observations and $n_-$ negative observations, with index sets $S_+=\{i:y_i=1\}$ and $S_-=\{k:y_k=0\}$. A ranking algorithm uses the training data to produce a scoring function $f:X\rightarrow \mathcal{R}$ that assigns each observation a real-valued score. Ideally, for a set of test observations drawn from the same (unknown) distribution as the training data, $f$ should rank the observations in order of $P(y=1|\x)$, and we measure the quality of the solution using ``rank statistics," or functions of the observations relative to each other. Note that bipartite ranking and binary classification are fundamentally different, and there are many works that explain the differences \citep[e.g.,][]{ErtekinRu11}. Briefly, classification algorithms consider a statistic of the observations relative to a decision boundary ($n$ comparisons) whereas ranking algorithms consider observations relative to each other (on the order of $n^2$ comparisons for pairwise rank statistics).
 
Since the evaluation of test observations uses a chosen rank statistic, the same rank statistic (or a convexified version of it) is optimized on the training set to produce $f$. Regularization is added to help with generalization. Thus, a ranking algorithm looks like:
\[ \min_{f\in\mathcal{F}}\;\; \textrm{RankStatistic}(f,\{\x_i,y_i\}_i) + C\cdot \textrm{Regularizer}(f).\]
This is the form of algorithm we consider for the reranking step.

\subsection{Reranking}
We are considering \textit{reranking} methods, which have two ranking steps. In the first ranking step, a base algorithm is run over the training set, and a scoring function $f_{\textrm{initial}}$ is produced and observations are rank-ordered by the score. A threshold is chosen, and all observations with scores above the threshold are reranked by another ranking algorithm which produces another scoring function $f$. To evaluate the quality of the solution on the test set, each test observation is evaluated first by $f_{\textrm{initial}}$. For the observations with scores above the threshold, they are reranked according to $f$. The full ranking of test observations is produced by appending the test observations scored by $f$ to the test observations scored only by $f_{\textrm{initial}}$.



\subsection{Rank Statistics}
We will extend the definition of conditional linear rank statistics~\citep{Clemencon08,Clemencon08_2} to include various definitions of rank. For now, we assume that there are no ties in score for any pair of observations, but we will heavily discuss ties later, and extend this definition to include rank definitions when there are ties. For the purpose of this section, the rank is currently defined so that the top of the list has the highest ranks, and all ranks are unique. The rank of an observation is the number of observations with scores at or beneath it: \[\textrm{Rank}(f(\x_i))=\sum_{t=1}^n \mathbf{1}_{[f(\x_t)\leq f(\x_i)]}.\]
Thus, ranks can range from 1 at the bottom to $n$ at the top. 
A \textbf{conditional linear rank statistic} (CLRS) created from scoring function $f:X\rightarrow \mathcal{R}$ is of the form
\[
\textrm{CLRS}(f) = \sum_{i=1}^n \mathbf{1}_{y_i=1}\phi(\textrm{Rank}(f(\x_i)).
\]
Here $\phi$ is a non-decreasing function producing only non-negative values. 
Without loss of generality, we define $a_\ell:=\phi(\ell)$, the contribution to the score if the observation with rank $\ell$ has label +1. By properties of $\phi$, we know $0\leq a_1\leq a_2\leq\cdots\leq a_n$. Then 
\begin{equation}
\label{rrf_equation}
\mathrm{CLRS}(f) = \sum_{i=1}^ny_i\sum_{\ell=1}^{n} \mathbf{1}_{[\mathrm{Rank}(f(\x_i))=\ell]}\cdot a_\ell.
\end{equation}
This class captures a broad collection of rank statistics, including the following well-known rank statistics:
\begin{itemize}\setlength{\itemsep}{0pt}
\item $a_\ell=\ell$: Wilcoxon Rank Sum (WRS) statistic, which is an affine function of the Area Under the Receiver Operator Characteristic Curve (AUC) when there are no ties in rank (that is, $f$ such that $f(\x_i)\neq f(\x_k)$ $\forall i\neq k$). 
\begin{eqnarray*}
\mathrm{WRS}(f) &=& \sum_{i\in S_+} \textrm{Rank}(f(\x_i))
=  n_+n_-\cdot \mathrm{AUC}(f) + \frac{n_+(n_++1)}{2}. 
\end{eqnarray*}
If ties are present, we would subtract the number of ties within the positive class from the right side of the equation above. The AUC is the fraction of correctly ranked positive-negative pairs:
\begin{equation*}
\mathrm{AUC}(f) = \frac{1}{n_+n_-}\sum_{i\in S_+}\sum_{k\in S_-}\mathbf{1}_{[f(\x_k)<f(\x_i)]}.
\end{equation*}
The AUC, when multiplied by constant $n_+n_-$, is the Mann-Whitney U statistic.
The AUC has an affine relationship with the pairwise misranking error (the fraction of positive-negative pairs in which a positive is ranked at or below a negative):
\begin{equation}\label{PME}
\textrm{PairwiseMisrankingError}(f) = 1 - \mathrm{AUC}(f) = \frac{1}{n_+n_-}\sum_{i\in S_+}\sum_{k\in S_-}\mathbf{1}_{[f(\x_k)\geq f(\x_i)]}.
\end{equation}
 Some ranking algorithms are designed to approximately minimize the pairwise misranking error, e.g., RankBoost \citep{Freund03}.
\item $a_\ell=\ell\cdot\mathbf{1}_{[\ell\geq \theta]}$ for predetermined threshold $\theta$: Related to the local AUC or partial AUC, which looks at the area under the leftmost part of the ROC curve only \citep{Clemencon07, Clemencon08, Dodd03}. The leftmost part of the ROC curve is the top portion of the ranked list. The top of the list is the most important in applications such as information retrieval and maintenance.
\item $a_\ell=\mathbf{1}_{[\ell=n]}$: Winner Takes All (WTA), which is 1 when the top observation in the list is positively-labeled and 0 otherwise \citep{BurgesETAL06}.
\item $a_\ell=\frac{1}{n-\ell+1}$: Mean Reciprocal Rank (MRR) \citep{BurgesETAL06}.
\item $a_\ell=\frac{1}{\log_2(n-\ell+2)}$: Discounted Cumulative Gain (DCG), which is used in information retrieval~\citep{Jarvelin00}.
\item $a_\ell=\frac{1}{\log_2(n-\ell+2)}\cdot\mathbf{1}_{[\ell\geq N]}$: DCG@N, which cuts off the DCG after the top N.~\citep[See, for instance,][]{Le10}.
\item $a_\ell=\ell^p$ for some $p>0$: Similar to the $P$-Norm Push, which uses $\ell_p$ norms to focus on the top of the list, the same way as an $\ell_p$ norm focuses on the largest elements of a vector~\citep{Rudin09}. 
\end{itemize}

Rank statistics have been studied in several theoretical papers \citep[e.g.,][]{pmlr-v30-Wang13}.

\subsection{Some Known Methods for Learning-To-Rank}
\label{approx_methods}
Current methods for learning-to-rank optimize convex proxies for the rank statistics discussed above. 
RankBoost \citep{Freund03} uses the exponential loss function as an upper bound for the 0-1 loss within the misranking error, 
$\mathbf{1}_{[z\leq 0]} \leq e^{-z}$,
and minimizes 
\begin{equation}\label{rb_loss}
\sum_{i\in S_+}\sum_{k\in S_-} e^{-\left(f(\x_i)-f(\x_k)\right)},
\end{equation}
whereas support vector machine ranking algorithms \citep[e.g.,][]{Joachims2002c, Herbrich00, ShenJo03} use the hinge loss $\max\{0,1-z\}$, that is:
\begin{equation}\label{svm_loss}
\sum_{i\in S_+}\sum_{k\in S_-} \max\{0,1-\left(f(\x_i)-f(\x_k)\right)\}+C\|f\|_K^2,
\end{equation}
where the regularization term is a reproducing kernel Hilbert space norm.
Other ranking algorithms include RankProp and RankNet~\citep{Caruana96, Burges05}.

\textcolor{black}{We note that the class of CLRS includes a very wide range of rank statistics, some of which concentrate on the top of the list (e.g., DCG) and some that do not (e.g.,WRS), and it is not clear which conditional linear rank statistics (if any) from the CLRS are close to the convexified loss functions (\ref{rb_loss}) and (\ref{svm_loss}). }

Since the convexified loss functions do not necessarily represent the rank statistics of interest, it is not even necessarily true that an algorithm for ranking will perform better for ranking than an algorithm designed for classification; in fact, AdaBoost and RankBoost provably perform equally well for ranking under fairly general circumstances \citep{RudinSc09}. \citet{ErtekinRu11} provide a discussion and comparison of classification versus ranking methods. Ranking algorithms ultimately aim to put the observations in order of $P(y=1|x)$, and so do some classification algorithms such as logistic regression. Thus, one might consider using logistic regression for ranking \citep[e.g.,][]{Cooper94,Fine97,Perlich03}. Logistic regression minimizes:
\begin{equation}\label{lr_loss}
\sum_{i=1}^n \ln\left(1+e^{-y_if(\x_i)}\right).
\end{equation}
This loss function does not closely resemble the AUC. On the other hand, it is surprising how common it is within the literature to use logistic regression to produce a predictive model, and yet evaluate the quality of the learned model using AUC.


Since RankBoost, RankProp, RankNet, etc., do not directly optimize any CLRS, they do not have the problem with ties in score that we will find when we directly try to optimize a CLRS. 

\subsection{Why Learning-To-Rank Methods Can Fail\label{subsec:why}}
We prove that the exponential loss and other common loss functions may yield poor results for some rank statistics.


\begin{theorem} There is a simple one-dimensional dataset for which there exist two ranked lists (called Solution 1 and Solution 2) that are completely reversed from each other (the top of one list is the bottom of the other and vice versa) such that the WRS (the AUC), partial AUC@100, DCG, MRR and hinge loss prefer Solution 1, whereas the DCG@100, partialAUC@10 and exponential loss all prefer Solution 2. 
\end{theorem}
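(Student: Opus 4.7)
The proof will be a single explicit counterexample. The idea is to exhibit a one-dimensional dataset on $n\approx 1000$ points together with the two natural scoring functions $f_1(x)=x$ and $f_2(x)=-x$, whose induced rankings are exact reversals, and then check the eight statistics by direct computation. The only design freedom is the set $R\subseteq\{1,\ldots,n\}$ of ranks at which positives sit in Solution~1; Solution~2 then automatically has positives at $\{n+1-r:r\in R\}$.

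\textbf{Construction.} I would put a thick bulk of consecutive positives in the upper middle of Solution~1, say at ranks $500,\ldots,903$, together with a single extra positive at the bottom rank $1$; all other points are negatives, and the observations themselves sit at $x=1,\ldots,n=1000$. Under reversal Solution~2 then has one positive at the very top (rank $n$) and a bulk sitting in the lower middle (ranks $98,\ldots,501$). The design principle is that the averaged statistics (WRS, full DCG, partial AUC@100, hinge loss) are dominated by the several-hundred-positive bulk and so favor Solution~1, while the strictly top-cutoff statistics DCG@100 and partial AUC@10 see only Solution~2's singleton top positive and favor Solution~2. For the exponential loss, both pair-sums are astronomically dominated by their single worst pair: for Solution~1 it is the lone positive at $x=1$ against the negative at $x=n$, contributing $e^{n-1}$, while for Solution~2 it is the top-of-bulk positive against the smallest negative, contributing only $e^{\Theta(n-100)}$, so Solution~2 is preferred. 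The hinge loss is only linear in the margin, so no such asymmetric blow-up occurs and Solution~1's far smaller count of misranked pairs wins.

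\textbf{Main obstacle.} The delicate part is MRR, because $1/(n-\ell+1)$ is also top-heavy and one might fear that Solution~2's singleton at rank $n$ (contributing $1$) swamps the bulk of Solution~1. The resolution is that MRR's weights decay only polynomially, so that a bulk of $\Theta(n)$ positives stretching from the middle of the list up to just below the top 100 contributes a harmonic-sum mass $H_{n/2+1}-H_{100}\approx\ln(n/200)$, which exceeds $1$ for $n\gtrsim 550$. By contrast DCG@100's hard cutoff silences that bulk entirely, so the top-of-list singleton still dominates DCG@100. Choosing a sufficiently wide bulk (roughly $n/2$ up to just below the top 100) therefore simultaneously wins MRR for Solution~1 and loses DCG@100 and partial AUC@10 to Solution~2.

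\textbf{Verification.} Given the construction, every inequality reduces to substituting the explicit positive-rank set $R=\{1\}\cup\{500,\ldots,903\}$ into formula \eqref{rrf_equation} with the appropriate $\{a_\ell\}$ for each of the six rank statistics, and into the pair-sums \eqref{rb_loss} and \eqref{svm_loss} with margins $\pm(x_i-x_k)$ for the two losses, then checking the sign of each comparison; once the harmonic-sum and worst-pair inequalities above are in hand, this is routine arithmetic.
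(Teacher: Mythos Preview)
Your proposal is correct and follows essentially the same approach as the paper: an explicit one-dimensional construction with four alternating clumps (from the top of Solution~1 down: a small negative block, a large positive block, a large negative block, a small positive block). The paper uses $n=6090$ with clump sizes $10/3000/3000/80$ and places the clumps at $x\approx 3,1,0,-10$ with tiny noise; you use $n=1000$ with clump sizes $97/404/498/1$ at integer locations $x=1,\ldots,1000$. Both instantiate the same idea, and your numerical checks (which I verified) all go through, including the tight MRR comparison (roughly $1.64$ versus $1.59$).

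Two minor differences worth noting. First, your singleton bottom positive makes the exponential-loss analysis especially clean, since one pair contributes $e^{999}$ and genuinely dominates everything else; in the paper's construction the dominant contribution to the exponential loss for Solution~1 actually comes from the $80\times 3000$ pairs at separation $\approx 10$ rather than the $80\times 10$ pairs at separation $\approx 13$, so your ``single worst pair'' heuristic is specific to your construction and not a general principle. Second, your explicit identification of MRR as the delicate case and your harmonic-sum estimate $H_{n/2+1}-H_{100}\approx\ln(n/200)>1$ is a nice piece of analysis that the paper's terse proof-by-construction does not spell out; it also explains why the bulk must reach up to just below the top~100 (so that partial AUC@100 still catches a few positives of it, with $901+902+903>1000$, while DCG@100 weights those three positives too lightly to beat Solution~2's singleton at the very top).
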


The proof is by construction. Along the single dimension $x$, the  dataset has 10 negatives near $x$=3, then 3000 positives near $x$=1, then 3000 negatives near $x$=0, and 80 positives near $x$=$-10$. We generated each of the four clumps of points wth a a standard deviation of 0.05 just so that there would not be ties in score. Figure \ref{Fig:distribution} shows data drawn from the distribution, where for display purposes we spread the points along the horizontal axis, but the vertical axis is the only one that matters: one ranked list goes from top to bottom (Solution 1) and the other goes from bottom to top (Solution 2).

The bigger clumps are designed to dominate rank statistics that do not decay (or decay slowly) down the list, like the WRS. The smaller clumps are designed to dominate rank statistics that concentrate on the top of the list, like the partial WRS or partial DCG.

\begin{figure}
\centering
\includegraphics[height=1.5in]{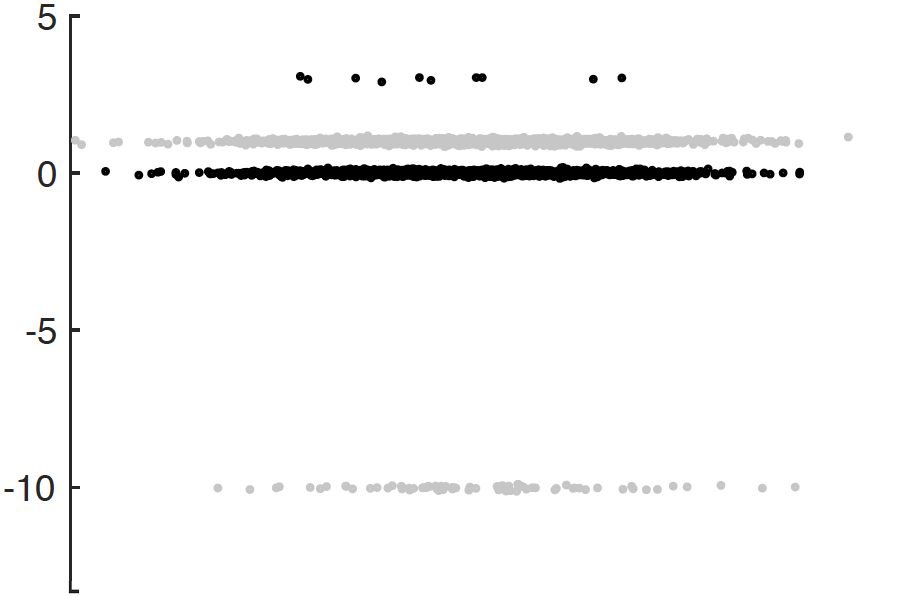}
\caption{An illustrative distribution of data. Positive observations are gray and negative observations are black. Solution 1 ranks observations from top to bottom, and Solution 2 ranks solutions from bottom to top.\label{Fig:distribution}}
\end{figure}

This theorem means that using the exponential loss to approximate the AUC, as RankBoost does, could give the completely opposite result than desired. 
It also means that using the hinge loss to approximate the partial DCG or partial AUC could yield completely the wrong result. Further, the fact that the exponential loss and hinge loss behave differently also suggests that convex losses can behave quite differently than the underlying rank statistics that they are meant to approximate. Another way to say this is that the convexification ``washes out" the differences between rank statistics. If we were directly to optimize the rank statistic of interest, the problem discussed above would vanish.


It is not surprising that rank statistics can behave quite differently on the same dataset. Rank statistics are very different than classification statistics. Rank statistics consider every pair of observations relative to each other, so even small changes in a scoring function $f$ can lead to large changes in a rank statistic. Classification is different -- observations are considered relative only to a decision boundary.

The example considered in this section also illustrates why arguments about consistency (or lack thereof) of ranking methods \citep[e.g.,][]{KotlowskiDeHu11} are not generally relevant for practice. Sometimes these arguments rely on incorrect assumptions about the class of models used for ranking with respect to the underlying distribution of the data. These arguments also depend on how the modeler is assumed to ``change" this class as the sample size increases to infinity.
The tightest bounds available for limited function classes and for finite data are those from statistical learning theory. Those bounds support optimizing rank statistics. 

To optimize rank statistics, there is a need for more refined models; however, this refinement comes at a computational cost of solving a harder problem. This thought has been considered in several previous works on learning-to-rank \citep{Le10,ErtekinRu11,tan2013direct,Chak08,QinEtAl10}. 


\subsection{{Most Learning-To-Rank Methods Have The Problem Discussed Above}}
The class of CLRS includes a very wide range of rank statistics, some of which concentrate on the top of the list (e.g., DCG) and some that do not (e.g.,WRS), and it is not clear which conditional linear rank statistics (if any) from the CLRS are close to the convexified loss functions of the ranking algorithms. 
RankBoost is not the only algorithm where problems can occur, and they can also occur for support vector machine ranking algorithms \citep[e.g.,][]{Joachims2002c, Herbrich00}
and algorithms like RankProp and RankNet \citep{Caruana96, Burges05}.
The methods of \citet{Ataman06}, \citet{Brooks10}, and \citet{tan2013direct} have used linear relaxations or greedy methods for learning to rank, rather than exact reranking, which will have similar issues; if one optimizes the wrong rank statistic, one may not achieve the correct answer.
Logistic regression is commonly used for ranking. Logistic regression minimizes:
$\sum_{i=1}^n \ln\left(1+e^{-y_if(\x_i)}\right).$
This loss function does not closely resemble AUC. On the other hand, it is surprising how common it is to use logistic regression to produce a predictive model, and yet evaluate the quality of the model using AUC.


\textcolor{black}{
The fundamental premise of learning-to-rank is that better test performance can be achieved by optimizing the performance measure (a rank statistic) on the training set. This means that one should choose to optimize differently for each rank statistic. However, in practice when the same convex substitute is used to approximate a variety of rank statistics, it directly undermines this fundamental premise, and could compromise the quality of the solution. }
\textcolor{black}{
If convexified rank statistics are a reasonable substitute for rank statistics, we would expect to see that (i) the rank statistics are reasonably approximated by their convexified versions, (ii) if we consider several convex proxies for the same rank statistic (in this case AUC), then they should all behave very similarly to each other, and similarly to the true (non-convexified) AUC. However, as we discussed, neither of these are true. 
}

\subsection{Ties and Problematic, Thus Use ResolvedRank and Subrank}
Dealing with ties in rank is critical when directly optimizing rank statistics. If a tie in rank between a positive and negative is considered as correct, then an optimal learning algorithm would produce the trivial scoring function $f(x)=\textit{constant}\;\forall x$; this solution would unfortunately attain the highest possible score when optimizing any pairwise rank statistic. This problem happens, for instance, with the definition of \citet{Clemencon08}, that is: 
\[\textrm{RankCV}(f(\x_i))=\sum_{k=1}^n \mathbf{1}_{f(\x_k)\leq f(\x_i)},\]
which counts ties in score as correct. Using this definition for rank in the CLRS:
\begin{equation}
\label{rrf_equation}
\mathrm{CLRS}_\textrm{CV}(f) = \sum_{i=1}^ny_i\sum_{\ell=1}^{n} \mathbf{1}_{[\mathrm{RankCV}(f(\x_i))=\ell]}\cdot a_\ell.
\end{equation}
we find that optimizing CLRS$_{\textrm{CV}}$ directly yields the trivial solution that all observations get the same score. So this definition of rank should not be used.

\textcolor{black}{We need to encourage our ranking algorithm not to produce ties in score, and thus in rank. To do this, we pessimistically consider a tie between and positive and a negative as a misrank. We will use two definitions of rank within the CLRS -- \textit{ResolvedRanks} and \textit{Subranks}. For ResolvedRanks, when negatives are tied with positives, we force the negatives to be higher ranked. For Subranks, we do not force this, but when we optimize the CLRS, we will prove that ties are resolved this way anyway.}

The assignment of ResolvedRanks and Subranks are not unique, there can be multiple ways to assign ResolvedRanks or Subranks for a set of observations.

\textit{We define the \textbf{Subrank} by the following formula:}
\begin{align*}
&\text{Subrank}(f(\x_i)) = \sum_{k=1}^n\mathbf{1}_{[f(\x_k)<f(\x_i)]}, \quad\forall i=1,\dots,n.
\end{align*}
The Subrank of observation $i$ is the number of observations that score strictly below it. Subranks range from 0 to $n-1$ and the CLRS becomes:
\begin{equation}
\label{rrf_equation}
\mathrm{CLRS}_\textrm{Subrank}(f) = \sum_{i=1}^ny_i\sum_{\ell=1}^{n} \mathbf{1}_{[\mathrm{Subrank}(f(\x_i))=\ell-1]}\cdot a_\ell.
\end{equation}
Observations with equal score have tied Subranks. 

\textrm{ResolvedRanks} are defined as follows, where the tied ranks are resolved pessimistically.
\textit{\textbf{ResolvedRanks} are assigned so that:
\begin{enumerate}\setlength{\itemsep}{0pt}
\item The ResolvedRank of an observation is greater than or equal to its Subrank.
\item If a positive observation and a negative observation have the same score, then the negative observation gets a higher ResolvedRank.
\item Each possible ResolvedRank, 0 through $n-1$, is assigned to exactly one observation.
\end{enumerate}
}
The SubRanks and ResolvedRanks are equal to each other when there are no ties in score. 
We provide one possible assignment of Subranks and ResolvedRanks in Figure~\ref{rankdefinitions} to demonstrate the treatment of ties.
We then have the CLRS with ResolvedRanks as:
\begin{equation}
\label{clrsresolvedranks}
\mathrm{CLRS}_\textrm{ResolvedRank}(f) = \sum_{i=1}^ny_i\sum_{\ell=1}^{n} \mathbf{1}_{[\mathrm{ResolvedRank}(f(\x_i))=\ell-1]}\cdot a_\ell.
\end{equation}
The ResolvedRanks are the quantity of interest, as optimizing them will provide a scoring function with minimal misranks and minimal ties between positives and negatives. 

\begin{figure}
\centering
\begin{small}
\begin{tabular}{|l|c c c c c c c c c|}
\hline
\textbf{Label} $y_i$ & $+$ & $+$ & $-$ & $-$ & $-$ & $+$ & $+$ & $-$ & $+$\\
\textbf{Score} $f(\x_i)$ & 6.2 & 6.2 & 5.8 & 4.6 & 3.1 & 3.1 & 2.3 & 1.7 & 1.7\\
\hline
\textbf{SubRank} & 7 & 7 & 6 & 5 & 3 & 3 & 2 & 0 & 0\\
\textbf{ResolvedRank} & 8 & 7 & 6 & 5 & 4 & 3 & 2 & 1 & 0\\
\hline
\end{tabular}
\end{small}
\caption{Demonstration of rank definitions.}
\label{rankdefinitions}
\end{figure}

Note that ties are not fundamental in other statistical uses of rank statistics, such as hypothesis testing. Ties are usually addressed by fixing them, or assigning the tied observations a (possibly fractional) rank that is the average (e.g., tied observations would get ranks 7.5 rather than 7 and 8)~\citep[see][]{Tamhane00, Wackerly02}. Ties are not treated uniformly across statistical applications~\citep{Savage57}, and there has been comparative work on treatment of ties \citep[e.g.,][]{Putter55}. This differs from when we optimize rank statistics, where ties are of central importance as we discussed.

\section{Reranking Formulations Using ResolvedRanks and Subranks}\label{formulations}

Here we produce the two formulations -- one for optimizing the regularized CLRS with ResolvedRanks, and the other for optimizing the regularized CLRS with Subranks.

\subsection{Maximize the Regularized CLRS with ResolvedRanks}

We would like to optimize the general CLRS, for any choices of the $a_\ell$'s, where we want to penalize ties in rank between positives and negatives, and we would also like a full ranking of observations. Thus, we will directly optimize $\mathrm{CLRS}_\textrm{ResolvedRank}(f) + C\cdot \textrm{Regularizer}(f)$ for our reranking algorithm. Our hypothesis space is the space of linear scoring functions $f(\x_i)=\w^T\x_i$, where $\w\in\mathcal{R}^d$. 
\begin{align*}
\max_{\w\in \mathcal{R}^d}&\mathrm{CLRS}_\textrm{ResolvedRank}(\w) - C \|\w\|_0\\
=\max_{\w\in \mathcal{R}^d}& \sum_{i=1}^ny_i\sum_{\ell=1}^{n} \mathbf{1}_{[\mathrm{ResolvedRank}(\w^T\x_i))=\ell-1]}\cdot a_\ell - C\|\w\|_0.
\end{align*}
Nonlinearities can be incorporated as usual by including additional variables, such as indicator variables or nonlinear functions of the original variables. We optimize over choices for vector $\w$.

Building up to the formulation, we will create the binary variable $t_{i\ell}$ so that it is 1 for $\ell \leq \text{ResolvedRank}(f(\x_i))+1$ and 0 otherwise. That is, if observation $i$ has ResolvedRank equal to 5, then $t_{i1},...,t_{i6}$ are all 1 and $t_{i7},...,t_{in}$ are 0. Then 
\begin{equation}\label{tele}
\sum_{\ell=1}^n (a_\ell-a_{\ell-1})t_{i\ell}
\end{equation}
is a telescoping sum for $\ell\leq $ResolvedRank$(f(\x_i))+1$. When we define $a_0=0$, the sum (\ref{tele}) becomes simply $a_{\textrm{ResolvedRank}(f(\x_i))+1}$, or equivalently, the term from (\ref{clrsresolvedranks}):
\[\sum_{\ell=1}^{n} \mathbf{1}_{[\mathrm{ResolvedRank}(f(\x_i))=\ell-1]}\cdot a_\ell.\]
As in (\ref{clrsresolvedranks}) we multiply by $y_i$ and sum over observations to produce the CLRS$_{\textrm{ResolvedRank}}$. Doing this to (\ref{tele}), CLRS$_{\textrm{ResolvedRank}}$ becomes:
\[
\sum_{i=1}^n y_i \sum_{\ell=1}^n (a_\ell-a_{\ell-1})t_{i\ell} \quad\textrm{where } a_0=0.
\]
By definition $t_{i1}=1$ for all $i$, so we can simplify the CLRS$_{\textrm{ResolvedRank}}$  function above to:
\begin{displaymath}
\sum_{i\in S_+}\left(\sum_{\ell=2}^n (a_\ell-a_{\ell-1})t_{i\ell} + a_1\right) = \vert S_+\vert a_1 + \sum_{i\in S_+}\sum_{\ell=2}^n (a_\ell-a_{\ell-1})t_{i\ell}.
\end{displaymath}
Note that the differences $a_\ell-a_{\ell-1}$ are all nonnegative. When they are zero they cannot contribute to the CLRS$_{\textrm{ResolvedRank}}$  function. When they are strictly positive there can be a contribution made to the CLRS$_{\textrm{ResolvedRank}}$  function. Thus, we introduce notation $\tilde{a}_\ell=a_\ell-a_{\ell-1}$ and $S_{\textrm{r}} = \{\ell\geq 2:\tilde{a}>0\}$ which are used in both formulations below. The CLRS$_{\textrm{ResolvedRank}}$ becomes:
\begin{equation}\label{simplif}
 \vert S_+\vert a_1 +\sum_{i\in S_+}\sum_{\ell\in S_{\textrm{r}}} \tilde{a}_{\ell}t_{i\ell}.
\end{equation}
We will maximize this, which means that the $t_{i\ell}$'s will be set to 1 when possible, because the $\tilde{a}_{\ell}$'s in the sum are all positive. When we maximize, we do not need the constant $\vert S_+\vert a_1$ term.

We define integer variables $r_i\in[0,n-1]$ to represent the ResolvedRanks of the observations.Variables $r_i$ and $t_{i\ell}$ are related in that 
$t_{i\ell}$ can only be 1 when $\ell \leq r_i+1$, implying $t_{i\ell}\leq \frac{r_i}{\ell-1}$.

We use linear scoring functions, so the score of instance $\x_i$ is $\w^T\x_i$. Variables $z_{ik}$ are indicators of whether the score of observation $i$ is above the score of observation $k$. Thus we want to have $z_{ik}=1$ if $\w^T\x_i>\w^T\x_k$ and  $z_{ik}=0$ otherwise. Beyond this we want to ensure no ties in score, so we want all scores to be at least $\varepsilon$ apart. This will be discussed further momentarily.

Our first ranking algorithm is below, which maximizes the regularized CLRS using ResolvedRanks. 
\begin{align}
\label{rrf_rank_formulation}
\textrm{argmax}_{\w,\gamma_j,z_{ik},t_{i\ell},r_i\forall i,k,\ell,j} \quad &\sum_{i\in S_+}\sum_{\ell\in S_{\textrm{r}}} \tilde{a}_lt_{i\ell} - C\sum_j \gamma_j\;\;\;\; \text{ subject to}\\\label{rank_constraint1}
&z_{ik}\leq \w^T(\x_i-\x_k)+1-\varepsilon, \quad\forall i,k=1,\dots,n,\\\label{rank_constraint2}
&z_{ik}\geq \w^T(\x_i-\x_k), \quad\forall i,k=1,\dots,n,\\
&\gamma_j\geq w_{j}\label{congamma1rr}\\
&\gamma_j\geq -w_j \label{congamma2rr}\\
\label{rank_constraint3}
&r_i-r_k\geq 1+n(z_{ik}-1), \quad\forall i,k=1,\dots,n,\\\label{rank_constraint4}
&r_k-r_i\geq 1-nz_{ik}, \quad\forall i\in S_+, k\in S_-,\\\label{rank_constraint5}
&r_k-r_i\geq 1-nz_{ik}, \quad\forall i,k\in S_+, i<k,\\\label{rank_constraint6}
&r_k-r_i\geq 1-nz_{ik}, \quad\forall i,k\in S_-, i<k,\\\label{rank_constraint7}
&t_{i\ell}\leq \frac{r_i}{\ell-1}, \quad\forall i\in S_+, \ell\in S_{r},\\\notag
&-1\leq w_j\leq 1, \quad\forall j=1,\dots,d,\\\notag
&0\leq r_i\leq n-1, \quad\forall i=1,\dots,n,\\\notag
&z_{ik}\in \{0,1\}, \quad\forall i, k=1,\dots,n,\\\notag
&t_{i\ell}\in \{0,1\}, \quad\forall i\in S_+, \ell\in S_{r},\\
&\gamma_j \in \{0,1\}, \quad \forall j\in\{1,...d\}.
\end{align}
To ensure that solutions with ranks that are close together are not feasible, Constraint~(\ref{rank_constraint1}) forces $z_{ik}=0$ if $\w^T\x_i-\w^T\x_k<\varepsilon$, and Constraint~(\ref{rank_constraint2}) forces $z_{ik}=1$ if $\w^T\x_i-\w^T\x_k>0$. Thus, a solution where any two observations have a score difference above 0 and less than $\varepsilon$ is not feasible. (Note that these constraints alone do not prevent a score difference of exactly 0; for that we need the constraints that follow.) Constraints (\ref{congamma1rr}) and (\ref{congamma2rr}) define the $\gamma_j$'s to be indicators of nonzero coefficients $w_j$.

Constraints (\ref{rank_constraint3})-(\ref{rank_constraint6}) are the ``tie resolution" equations.
Constraint~(\ref{rank_constraint3}) says that for any pair $(\x_i,\x_k)$, if the score of $i$ is larger than that of $k$ so that  $z_{ik}=1$, then  $r_i\geq r_k+1$. That handles the assignment of ranks when there are no ties, so now we need only to resolve ties in the score. We have Constraint~(\ref{rank_constraint4}) that applies to positive-negative pairs: when the pair is tied, this constraint forces the negative observation to have higher rank. Similarly, Constraints~(\ref{rank_constraint5}) and~(\ref{rank_constraint6}) apply to positive-positive pairs and negative-negative pairs respectively, and state that ties are broken lexicographically, that is, according to their index in the dataset. 

We discussed Constraint~(\ref{rank_constraint7}) earlier, which provides the definition of $t_{i\ell}$ so that $t_{i\ell}=1$ whenever $\ell\leq r_i+1$. Also we force the $w_j$'s to be between -1 and 1 so their values do not go to infinity and so that the $\varepsilon$ values are meaningful, in that they can be considered relative to the maximum possible range of $w_j$.

\subsection{Maximize the Regularized CLRS with Subranks}
We are solving:
\begin{align*}
\max_{\w\in \mathcal{R}^d}&\mathrm{CLRS}_\textrm{Subrank}(\w) - C \|\w\|_0\\
=\max_{\w\in \mathcal{R}^d}& \sum_{i=1}^ny_i\sum_{\ell=1}^{n} \mathbf{1}_{[\mathrm{Subrank}(\w^T\x_i))=\ell-1]}\cdot a_\ell - C\|\w\|_0.
\end{align*}

Maximizing the Subrank problem is much easier, since we do not want to force a unique assignment of ranks. This means the ``tie resolution" equations are no longer present. We can directly assign a Subrank for observation $i$ by $r_i=\sum_{k=1}^n z_{ik}$ because it is exactly the count of observations ranked beneath observation $i$; that way the $r_i$ variables do not even need to appear in the formulation.

Here is the formulation:
\begin{align}
\label{rrf_minrank_formulation}
&\textrm{argmax}_{\w,\gamma_j,z_{ik},t_{i\ell} \forall i,k,\ell,j} \quad \sum_{i\in S_+}\sum_{\ell\in S_r} \tilde{a}_lt_{i\ell} - C\sum_j \gamma_j\;\;\text{subject to}\\
\label{minrank_constraint2}
&t_{i\ell}\leq \frac{1}{\ell-1}\sum_{k=1}^n z_{ik}, \quad\forall i\in S_+,\ell\in S_r,\\
\label{minrank_constraint1}
&z_{ik}\leq \w^T(\x_i-\x_k)+1-\varepsilon, \quad\forall i\in S_+,k=1,\dots,n,\\
&\gamma_j\geq w_{j}\label{congamma1}\\
&\gamma_j\geq -w_j \label{congamma2}\\
&z_{ik}+z_{ki} = \mathbf{1}_{[\x_i\neq \x_k]}, \quad\forall i,k\in S_+,\label{minrank_constraint3}\\
&t_{i\ell}\geq t_{i,\ell+1}, \quad\forall i\in S_+, \ell\in S_r\setminus\max(\ell \in S_r),\\\label{minrank_constraint5}
&\sum_{i\in S_+}\sum_{\ell\in S_r} \tilde{a}_lt_{i\ell}\leq\sum_{\ell=1}^n a_\ell,\\\label{minrank_constraint6}
&z_{ik}=0, \quad\forall i\in S_+,k=1,\dots,n, \x_i=\x_k,\\\notag
&-1\leq w_j\leq 1, \quad\forall j=1,\dots,d,\\\notag
&t_{i\ell},z_{ik},\gamma_j\in \{0,1\}, \quad\forall i\in S_+,\ell\in S_r, k=1,\dots,n, \;\;j\in\{1,...d\}.
\end{align}

Constraint (\ref{minrank_constraint2}) is similar to Constraint (\ref{rank_constraint7}) from the ResolvedRank formulation. Since we are maximizing with respect to the $t_{i\ell}$'s, the $z_{ik}$'s will naturally be maximized by Constraint (\ref{minrank_constraint2}). 
Thus we need to again force the $z_{ik}$'s down to 0 when $\w^T\x_i-\w^T\x_k<\varepsilon$, which is done via Constraint  (\ref{minrank_constraint1}). Constraints (\ref{congamma1}) and (\ref{congamma2}) define the $\gamma_j$'s to be indicators of nonzero coefficients $w_j$.
It is not necessary to include Constraints~(\ref{minrank_constraint3}) through~(\ref{minrank_constraint6}); they are there only to speed up computation, by helping to make the linear relaxation of the integer program closer to the set of feasible integer points. For the experiments in this paper they did not substantially speed up computation and we chose not to use them.


Beyond the formulations presented here, we have placed a formulation for optimizing the regularized AUC in the Appendix \ref{AUCopt} and another formulation for optimizing the general pairwise rank statistic that inspired RankBoost \citep{Freund03} in Appendix \ref{Pairwiseopt}.

\section{Why Subranks Are Often Sufficient}
\label{sectionrelationship}

The ResolvedRank formulation above has $2d+n^2+n_+|S_r|+n$ variables, which is the total number of $w$, $\gamma$, $z$, $t$, and $r$ variables. The Subrank formulation on the other hand has only $2d+n_+n+n_+|S_r|$ variables, since we only have $w$, $\gamma$, $z$, and $t$. This difference of $n_-\cdot n+n$ variables can heavily influence the speed at which we are able to find a solution. We would ultimately like to get away with solving the Subrank problem rather than the ResolvedRank problem. This would allow us to scale up our reranking problem substantially. In this section we will show why this is generally possible.

Denote the objectives as follows, where we have $f(\x_i)=\w^T\x_i$.
\begin{eqnarray*}
\GRR(f)&:=&\sum_{i=1}^ny_i\sum_{\ell=1}^{n} \mathbf{1}_{[\mathrm{ResolvedRank}(f(\x_i))=\ell-1]}\cdot a_\ell - C\|\w\|_0\\
\Gsub(f)&:=&\sum_{i=1}^ny_i\sum_{\ell=1}^{n} \mathbf{1}_{[\mathrm{Subrank}(f(\x_i))=\ell-1]}\cdot a_\ell - C\|\w\|_0.
\end{eqnarray*}
In this section, we will ultimately prove that any maximizer of $\Gsub$ also maximizes $\GRR$. This is true under a very general condition, which is that there are no exactly duplicated observations. The reason for this condition is not completely obvious. In the Subrank formulation, if two observations are exactly the same, they will always get the same score and Subrank - there is no mechanism to resolve ties and assign ranks. This causes problems when approximating the ResolvedRank with the Subrank. We remark however, that this should not be a problem in practice. First, we can check in advance whether any of our observations are exact copies of each other, so we know whether it is likely to be a problem. Second, if we do have duplicated observations, we can always slightly perturb the $\x$ values of the duplicated observations so they are not identical. Third, we remark that if the data are chosen from a continuous distribution, with probability 1 the observations will all be distinct anyway. We have found that in practice the Subrank formulation does not have problems even when there are ties.

In the first part of the section, we consider whether there are maximizers of $\GRR$ that have no ties in score, in other words, solutions $\w$ where $f(\x_i)\neq f(\x_k)$ for any two observations $i$ and $k$. Assuming such solutions exist, we then show that any maximizer of $\Gsub$ is also a maximizer of $\GRR$. This is the result within Theorem \ref{step1}. In the second part of the section, we show that the assumption we made for Theorem \ref{step1} is always satisfied, assuming no duplicated observations. That is, a maximizer of $\GRR$ with no ties in score exists.
The outline within our technical report \citep{BertsimasChRuOR38811} follows a similar outline but does not include regularization.

The following lemma establishes basic facts about the two objectives:
\begin{lemma}
\label{equalobj}
$\Gsub(f) \leq \GRR(f)$ for all $f$. Further, $\Gsub(f) = \GRR(f)$ for all $f$ with no ties.
\end{lemma}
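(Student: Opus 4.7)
The plan is to exploit the fact that $\phi$ (equivalently the sequence $a_1\leq a_2\leq\cdots\leq a_n$) is nondecreasing and that, by definition, $\mathrm{ResolvedRank}(f(\x_i))\geq \mathrm{Subrank}(f(\x_i))$ for every observation $i$ (property~1 of the ResolvedRank definition). Since both $\GRR$ and $\Gsub$ are built by summing $a_{\mathrm{rank}(f(\x_i))+1}$ over $i\in S_+$ and then subtracting the identical regularizer $C\|\w\|_0$, the inequality will reduce to a termwise comparison of the contributions of each positive observation.

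Concretely, I would first rewrite
\[
\GRR(f)-\Gsub(f) \;=\; \sum_{i\in S_+}\Bigl(a_{\mathrm{ResolvedRank}(f(\x_i))+1} - a_{\mathrm{Subrank}(f(\x_i))+1}\Bigr),
\]
noting that the regularization terms and the constant $|S_+|a_1$-type contributions cancel. Then I would invoke (i) $\mathrm{ResolvedRank}(f(\x_i))\geq \mathrm{Subrank}(f(\x_i))$ from property~1 of the ResolvedRank definition, and (ii) monotonicity of $\phi$ to conclude each summand is nonnegative, yielding $\Gsub(f)\leq \GRR(f)$. This argument is independent of which valid ResolvedRank assignment we pick, so the non-uniqueness pointed out earlier in the paper is not an obstacle.

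For the equality claim, I would observe that when $f$ has no ties in score, $\mathrm{Subrank}(f(\x_i))=\#\{k:f(\x_k)<f(\x_i)\}$ already takes $n$ distinct values in $\{0,1,\ldots,n-1\}$, so the three defining conditions for ResolvedRank (it is $\geq$ Subrank, it breaks positive/negative ties by placing the negative above, and it is a permutation of $\{0,\ldots,n-1\}$) are satisfied only by $\mathrm{ResolvedRank}(f(\x_i))=\mathrm{Subrank}(f(\x_i))$. Substituting this equality into the displayed identity above collapses the difference to zero, giving $\Gsub(f)=\GRR(f)$.

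There is really no hard step here; the main thing to be careful about is the bookkeeping with the index shift (Subrank $\ell-1$ corresponds to coefficient $a_\ell$) and the fact that ResolvedRank is not unique, so the argument must be stated in a form that does not depend on the particular tie-breaking chosen. Both are handled simply by using only the inequality $\mathrm{ResolvedRank}\geq \mathrm{Subrank}$ guaranteed by the definition, and monotonicity of $\{a_\ell\}$.
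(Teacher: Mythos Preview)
Your proposal is correct and follows essentially the same approach as the paper: both rely on the inequality $\mathrm{ResolvedRank}(f(\x_i))\geq \mathrm{Subrank}(f(\x_i))$ together with the monotonicity of $\{a_\ell\}$ to get a termwise comparison, and both observe that the two ranks coincide when there are no ties. Your aside about ``$|S_+|a_1$-type contributions'' cancelling is unnecessary (the difference $\GRR(f)-\Gsub(f)$ already reduces directly to $\sum_{i\in S_+}(a_{\mathrm{ResolvedRank}(f(\x_i))+1}-a_{\mathrm{Subrank}(f(\x_i))+1})$ once the identical regularizers cancel), but it is harmless.
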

\begin{proof}
Choose any function $f$. Since by definition Subrank$(f(x_i))$$\leq$ ResolvedRank$(f(x_i))$ $\forall i$, and since the $a_{\ell}$ are nondecreasing, 
\begin{align}
\label{minrank_rank}
\sum_{\ell=1}^{n} \mathbf{1}_{[\mathrm{Subrank}(f(x_i))=\ell-1]}\cdot a_\ell &= a_{(\mathrm{Subrank}(f(x_i))+1)}\\\notag
&\leq a_{(\mathrm{ResolvedRank}(f(x_i))+1)} \\\notag
&= \sum_{\ell=1}^{n} \mathbf{1}_{[\mathrm{ResolvedRank}(f(x_i))=\ell-1]}\cdot a_\ell\quad\forall i.
\end{align}
Multiplying both sides by $y_i$, summing over $i$ and subtracting the regularization term from both sides yields $\Gsub(f) \leq \GRR(f)$. When no ties are present (that is, $f(x_i)\neq f(x_k)$ $\forall i\neq k$), Subranks and ResolvedRanks are equal, and the inequality above becomes an equality, and in that case, $\Gsub(f) = \GRR(f)$. 
\end{proof}
This lemma will be used within the following theorem which says that maximizers of $\Gsub$ are maximizers of $\GRR$.
\begin{theorem}
\label{step1}
Assume that the set $\mathrm{argmax}_f\GRR(f)$ contains at least one function $\bar{f}$ having no ties in score. Then any $f^*$ such that
$f^*\in \mathrm{argmax}_f\Gsub(f)$ also obeys
$f^*\in \mathrm{argmax}_f\GRR(f)$.
\end{theorem}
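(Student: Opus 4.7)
The plan is a short sandwich argument built entirely on Lemma \ref{equalobj}. Let $\bar{f}$ be the maximizer of $\GRR$ with no ties whose existence is assumed in the hypothesis, and let $f^*$ be any maximizer of $\Gsub$. I would work only with the scalar values of the two objectives and never touch the internal combinatorial structure of Subranks versus ResolvedRanks again, since the lemma has already packaged everything we need.

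First, I would invoke Lemma \ref{equalobj} in its ``equality'' form at $\bar{f}$: because $\bar{f}$ has no ties, $\Gsub(\bar{f}) = \GRR(\bar{f})$. Next I would use that $f^*$ maximizes $\Gsub$ to write $\Gsub(f^*) \geq \Gsub(\bar{f})$, and then apply Lemma \ref{equalobj} in its ``inequality'' form at $f^*$ to get $\GRR(f^*) \geq \Gsub(f^*)$. Chaining these three facts gives
\[
\GRR(f^*) \;\geq\; \Gsub(f^*) \;\geq\; \Gsub(\bar{f}) \;=\; \GRR(\bar{f}).
\]
Finally I would close the loop with the defining property of $\bar{f}$ as a maximizer of $\GRR$: $\GRR(f^*) \leq \GRR(\bar{f})$. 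Combined with the displayed chain, this forces $\GRR(f^*) = \GRR(\bar{f}) = \max_f \GRR(f)$, so $f^* \in \mathrm{argmax}_f \GRR(f)$.

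There is no real obstacle here; the only subtlety worth flagging is that the argument never requires $f^*$ itself to be tie-free. All that matters is that \emph{some} tie-free maximizer of $\GRR$ exists, which lets Lemma \ref{equalobj} be used as an equality at $\bar{f}$ and as an inequality at $f^*$. Establishing the existence of such a $\bar{f}$ (the assumption of the theorem) is the nontrivial content and is deferred to the second part of the section.
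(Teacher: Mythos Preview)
Your proposal is correct and is essentially the paper's own argument: both rely solely on Lemma~\ref{equalobj} applied as an equality at the tie-free $\bar f$ and as an inequality at $f^*$, together with the maximizer properties of $\bar f$ and $f^*$. The only cosmetic difference is that the paper phrases the final step as a contradiction ($\GRR(f^*)<\GRR(\bar f)=\Gsub(\bar f)=\Gsub(f^*)\leq\GRR(f^*)$), whereas you write the same chain directly as a sandwich; the content is identical.
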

\begin{proof}
Assume there exists $\bar f\in\mathrm{argmax}_f\GRR(f)$ such that there are no ties in score. 
Since $\bar f$ is a maximizer of $\GRR$ and does not have ties, it is also a maximizer of $\Gsub$ by Lemma \ref{equalobj}:
\[
\Gsub(\bar f) =\GRR(\bar f)=\max_f \GRR(f) \geq \max_f \Gsub(f), \textrm{ thus } \Gsub(\bar f)=\max_f \Gsub(f).
\]
Let $f^*$ be an arbitrary maximizer of $\Gsub(f)$ (not necessarily tie-free). We claim that $f^\star$ is also a maximizer of $\GRR$. Otherwise, 
\[
\GRR(f^\star ) < \GRR(\bar f) \overset{(a)}{=} 
 \Gsub(\bar f) \overset{(b)}{=}
 \Gsub(f^\star)
 \overset{(c)}{\leq} \GRR(f^{\star}),
 \]
which is a contradiction. Equation (a) comes from Lemma \ref{equalobj} applied to $\bar f$. Equation (b) comes from the fact that both $\bar f$ and $f^*$ are maximizers of $\Gsub$. Inequality {(c)} comes from Lemma \ref{equalobj} applied to $f^*$.

\end{proof}

Interestingly enough, it is true that if $\bar{f}$ maximizes $\GRR(f)$ and it has no ties, then $\bar{f}$ also maximizes $\Gsub(f)$. In particular, 
\begin{equation*}
\max_f\Gsub(f)\leq \max_f\GRR(f) \leq \GRR(\bar f)=\Gsub(\bar f).
\end{equation*}


Note that so far, the results about $\GRR$ and $\Gsub$ hold for functions from any arbitrary set; we did not need to have $f=\w^T\x$ in the preceding computations. In what follows we take advantage of the fact that $f$ is a linear combination of features in order to perturb the function away from ties in score. With this method we will be able to achieve the same maximal value of $\GRR$ but with no ties.  

Define $M$ to be the maximum absolute value of the features, so that for all $i,j$, we have $|x_{ij}|\leq M$. 

\begin{lemma}
\label{LemmaHighDim}
If we are given $\bar f\in\mathrm{argmax}_f\GRR(f)$ that yields a scoring function $\bar f(\x)=\bar \w^T\x$ with ties, it is possible to construct a perturbed scoring function $\hat{f}$ that: 
\begin{description}
\item[i] preserves all pairwise orderings, 
$\bar f(\x_i)>\bar f(\x_k)\Rightarrow\hat f(\x_i)>\hat f(\x_k)$, 
\item[ii] has no ties, $\hat f(\x_i)\neq \hat f(\x_k)$ for all $i, k$.
\item[iii] has $\|\bar{\w}\|_0=\|\hat{\w}\|_0$.
\end{description}
This result holds whenever no observations are duplicates of each other, $\x_i\neq \x_k\;\; \forall i,k$.
\end{lemma}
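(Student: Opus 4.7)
The plan is to realize $\hat f$ by perturbation: set $\hat\w = \bar\w + \eta\delta$ for a well-chosen direction $\delta\in\mathbb{R}^d$ and a sufficiently small scale $\eta>0$. Property (i) follows from the openness of strict inequalities: once $\eta$ is chosen smaller than $\epsilon_0/\max_{i,k}|\delta^T(\x_i-\x_k)|$, where $\epsilon_0=\min\{|\bar\w^T(\x_i-\x_k)|:\bar\w^T(\x_i-\x_k)\neq 0\}>0$, one has $\mathrm{sign}(\hat\w^T(\x_i-\x_k))=\mathrm{sign}(\bar\w^T(\x_i-\x_k))$ whenever the latter is nonzero. Property (iii) will be secured by restricting $\mathrm{supp}(\delta)\subseteq J:=\mathrm{supp}(\bar\w)$ and taking $\eta<\min_{j\in J}|\bar w_j|/(2\|\delta\|_\infty)$, so that no coordinate of $J$ is canceled and no coordinate outside $J$ is activated; hence $\mathrm{supp}(\hat\w)=J$ and $\|\hat\w\|_0=\|\bar\w\|_0$.

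Property (ii) is the substantive condition on $\delta$. Each tied pair $(i,k)$ with $\bar\w^T(\x_i-\x_k)=0$ and $\x_i\neq\x_k$ imposes the single linear inequality $\delta^T(\x_i-\x_k)\neq 0$. Since there are only finitely many such pairs, and since for each pair the forbidden set is a hyperplane in $\mathbb{R}^J$ (proper whenever the projection $(\x_i-\x_k)|_J$ is nonzero), an almost-every choice of $\delta\in\mathbb{R}^J$ avoids all of them simultaneously. Fixing such a $\delta$ and then shrinking $\eta$ to meet the bounds in the previous paragraph (consistent, since each constraint only asks $\eta$ to lie below a positive quantity) produces a $\hat\w$ satisfying all three properties at once.

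The main obstacle is the case where some tied pair $(\x_i,\x_k)$ with $\x_i\neq\x_k$ agrees on every coordinate of $J$, so $(\x_i-\x_k)|_J=0$ and no perturbation within $\mathbb{R}^J$ can separate them; property (iii) then forbids simply activating a new coordinate. To handle this I would invoke the optimality of $\bar\w$ together with the no-duplicates hypothesis through a coordinate-swap argument: activating some $j^*\notin J$ on which the unresolvable pair disagrees would, for the correct sign, strictly improve the CLRS contribution of that pair; optimality of $\bar\w$ forces this potential improvement to be offset by the $\ell_0$ penalty $C$, which should allow us to exchange some sufficiently small-magnitude $j_0\in J$ for $j^*$ and produce an alternate element of $\mathrm{argmax}\,\GRR$ with the same support size but a richer support that now discriminates $(\x_i,\x_k)$. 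After iterating this swap for every such bad pair we would return to the setting of the first two paragraphs, where the generic-perturbation construction applies. Verifying that each such swap preserves every strict ordering under $\bar\w$ and leaves the objective equal to $\max\GRR$ is the most delicate step of the argument.
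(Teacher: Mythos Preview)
Your first two paragraphs essentially reproduce the paper's construction: the paper also perturbs $\bar\w$ by $\delta\mathbf{u}$ where $\mathbf{u}$ is supported on $J=\mathrm{supp}(\bar\w)$ (a uniformly random unit vector in $\mathbb{R}^{|J|}$, embedded as zero outside $J$) and $\delta>0$ is chosen small enough both to preserve the strict-ordering margin and to keep every active coordinate nonzero. So for (i) and (iii), your argument and the paper's coincide.

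Where you diverge is precisely the obstacle you flag in your third paragraph: if a tied pair $(\x_i,\x_k)$ satisfies $(\x_i-\x_k)|_J=0$, then no perturbation supported on $J$ can separate them. The paper's proof of (ii) does not address this case; it asserts that because $\x_i\neq\x_k$ and $\mathbf{u}$ is random, $\mathbf{u}^T(\x_i-\x_k)\neq 0$ with probability one, which fails exactly when $(\x_i-\x_k)|_J=0$. So you have in fact identified a gap that the paper's own proof glosses over, not merely an obstacle for your variant.

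That said, your proposed repair via coordinate swaps does not work as written. First, the ``strict improvement'' leverage applies only to mixed-label ties, since the pessimistic ResolvedRank puts the negative above the positive; for same-label tied pairs the CLRS is insensitive to how the tie is resolved, so optimality gives you nothing, yet (ii) still requires those ties be broken. Second, even for a mixed-label pair, the swap of ``some sufficiently small-magnitude $j_0\in J$ for $j^*$'' is ill-posed: nothing guarantees any $|\bar w_{j_0}|$ is small, and zeroing an active coordinate is a discrete change that can destroy arbitrarily many strict orderings---optimality of $\bar\w$ for $\GRR$ does not control the effect of deleting a coordinate. Finally, even granting an improvement from activating $j^*$, optimality only tells you the gain is at most $C$; it does not furnish an alternate maximizer with the same $\ell_0$ norm. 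So the third paragraph, while correctly diagnosing the issue, does not close it.
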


\begin{proof}
We will construct $\hat f(\x)=\hat{\w}^T\x$ using the following procedure:
\begin{description}
\item[Step 1] Find the nonzero indices of $\bar \w$: let $\bar J:=\{j:\bar{w}_j\neq 0\}$.  Choose a unit vector $\mathbf{v}$ in $\mathcal{R}^{|J|}$ uniformly at random. Construct vector $\mathbf{u}\in\mathcal{R}^d$ to be equivalent to $\mathbf{v}$ for $\mathbf{u}$ restricted to the dimensions $J$ and 0 otherwise. 
\item[Step 2] Choose real number $\delta$ to be between 0 and $\eta$, where
\[ 
\eta = \min\left\{\frac{\mathrm{margin}_{\bar \w}}{2M\sqrt{d}}, \min_{j\in\bar{J}} |w_j|\right\}
\]
where in the above expression
\begin{equation*}
\mathrm{margin}_{\bar \w} = \min_{\{i,k: \bar f(\x_i)>\bar f(\x_k)\}} \left(\bar f(\x_i) - \bar f(\x_k)\right).
\end{equation*}
\item[Step 3] Construct $\hat \w$ as follows: $\hat \w = \bar \w + \delta \mathbf{u}$.
\end{description}
With probability one, we will show that $\hat f(\x)=\hat \w^T\x$ preserves pairwise orderings of $\bar f$ but with no ties. 

We will prove each part of the lemma separately.

\noindent\textbf{Proof of (i)} 
We choose any two observations $\x_i$ and $\x_k$ where  $\bar f(\x_i)>\bar f(\x_k)$, and we need to show that $\hat f(\x_i)>\hat f(\x_k)$.  
\begin{align}
\nonumber
\hat f(\x_i)-\hat f(\x_k) &= (\bar \w+\delta \mathbf{u})^T(\x_i-\x_k)
= \bar \w^T(\x_i-\x_k)+\delta \mathbf{u}^T(\x_i-\x_k)\\
&= \bar f(\x_i)-\bar f(\x_k)+\delta \mathbf{u}^T(\x_i-\x_k)\label{fdiff}
\geq \mathrm{margin}_{\bar \w}+\delta \mathbf{u}^T(\x_i-\x_k).
\end{align}
In order to bound the right hand side away from zero we will use that:
\begin{equation}
\label{x1x2}
\|\x_i-\x_k\|_2 = \left(\sum_{j=1}^d (x_{ij}-x_{kj})^2\right)^{1/2}\leq \left(\sum_{j=1}^d (2M)^2\right)^{1/2} = 2M\sqrt{d}.
\end{equation}
Now,
\begin{equation*}
\left|\delta \mathbf{u}^T(\x_i-\x_k)\right|\overset{(a)}{\leq} \delta \|\mathbf{u}\|_2\|\mathbf{x}_i-\mathbf{x}_k\|_2\overset{(b)}{\leq}  
\delta\cdot 2M\sqrt{d} \overset{(c)}{<}  \frac{\mathrm{margin}_{\bar \w}}{2M\sqrt{d}}\cdot 2M\sqrt{d} = \mathrm{margin}_{\bar \w}.
\end{equation*}
Here, inequality (a) follows from the Cauchy-Schwarz inequality, (b) follows from (\ref{x1x2}) and that $\|\mathbf{u}\|_2=1$, and (c) follows from the bound on $\delta$ from Step 2 of the procedure for constructing $\hat{f}$ above. Thus
$\delta \mathbf{u}^T(\x_i-\x_k) > -\mathrm{margin}_{\bar \w}$,
which combined with (\ref{fdiff}) yields
\begin{equation*}
\hat f(\x_i)-\hat f(\x_k)\geq\mathrm{margin}_{\bar \w} + \delta \mathbf{u}^T(\x_i-\x_k) > \mathrm{margin}_{\bar \w}-\mathrm{margin}_{\bar \w} = 0.
\end{equation*}

\noindent \textbf{Proof of (ii)} We show that $\hat f$ has no ties $\hat f(\x_i)\neq \hat f(\x_k)$ for all $i, k$. This must be true with probability 1 over the choice of random vector $\mathbf{u}$.

Since we know that all pairwise inequalities are preserved, we need to ensure only that ties become untied through the perturbation $\mathbf{u}$. Thus, let us consider tied observations $\x_i$ and $\x_k$, so $\bar f(\x_i)=\bar f(\x_k)$. We need to show that they become untied: we need to show $|\hat f(\x_i)-\hat f(\x_k)|>0$. Consider $|\hat f(\x_i)-\hat f(\x_k)|$:
\begin{align*}
|\hat f(\x_i)-\hat f(\x_k)| &= \left|(\bar \w+\delta \mathbf{u})^T(\x_i-\x_k)\right|= \left|\bar{\mathbf{w}}^T(\x_i-\x_k) + \delta \mathbf{u}^T(\x_i-\x_k)\right| \\
&= |\delta|\left|\mathbf{u}^T(\x_i-\x_k)\right|.
\end{align*}
We now use the key assumption that no two observations are duplicates -- this implies that at least one entry of vector $\x_i-\x_k$ is nonzero. Further, since $\mathbf{u}$ is a random vector, the probability that it is orthogonal to vector $\x_i-\x_k$ is zero. So, 
with probability one with respect to the choice of $\mathbf{u}$, we have $\left|\mathbf{u}^T(\x_i-\x_k)\right|>0$. From the expression above,
\begin{equation*}
|\hat f(\x_i)-\hat f(\x_k)| = |\delta|\left|\mathbf{u}^T(\x_1-\x_2)\right| >0. 
\end{equation*}

\noindent \textbf{Proof of (iii)} By our definitions, $\hat \w = \bar \w + \delta \mathbf{u}$, $\delta\leq  \min_{j\in\bar{J}} |w_j|$, and $\mathbf{u}$ is only nonzero in the components where $\bar\w$ is not 0. Each component of $\mathbf{u}$ is nonzero with probability 1. For component $j$ where $\bar{w}_j\neq 0$, we have $|\delta u_j|\leq \delta \|\mathbf{u}\|_2\leq \delta \leq \min_{j^{'}\in\bar{J}}\bar{w}_{j^{'}}\leq \bar{w}_j$ which means $|\hat{w}_j|=|\bar{w}_j+\delta u_j|>0$. So, for all components where $\bar{\w}$ is nonzero, we also have $\hat{\w}$ nonzero in those components. Further, for all components where $\bar{\w}$ is zero, we also have $\hat{\w}$ zero in those components. Thus $\|\bar{\w}\|_0=\|\hat{\w}\|_0$.

\end{proof}

The result below establishes the main result of the section, which is that if we optimize $\Gsub$, we get an optimizer of $\GRR$ even though it is a much more complex optimization problem to optimize $\GRR$ directly.

\begin{theorem}
\label{step2}
Given $f^*\in\mathrm{argmax}_f\Gsub(f)$, then $f^*\in\mathrm{argmax}_f\GRR(f)$.\\
This holds when there are no duplicated observations, $x_i\neq x_k$ $\forall i,k$ where $i\neq k$.
\end{theorem}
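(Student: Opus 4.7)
The plan is to reduce this to Theorem~\ref{step1} by exhibiting a tie-free element of $\mathrm{argmax}_f \GRR(f)$. Start with any maximizer $\bar f(\x)=\bar\w^T\x$ of $\GRR$; at least one exists because $\GRR$ depends on $\w$ only through the induced ordering of the $n$ observations and the discrete quantity $\|\w\|_0$, so it takes only finitely many values. If $\bar f$ already has no ties we are done; otherwise, apply Lemma~\ref{LemmaHighDim} (whose no-duplicates hypothesis is exactly the one in the theorem) to construct $\hat f(\x) = \hat\w^T\x$ that preserves every strict pairwise ordering of $\bar f$, has no ties, and satisfies $\|\hat\w\|_0 = \|\bar\w\|_0$.

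The main step is to verify that this $\hat f$ is itself a maximizer of $\GRR$. The regularization terms agree by part (iii) of Lemma~\ref{LemmaHighDim}, so it suffices to show $\mathrm{CLRS}_{\mathrm{ResolvedRank}}(\hat f) \geq \mathrm{CLRS}_{\mathrm{ResolvedRank}}(\bar f)$. Partition the observations into equivalence classes under the relation ``equal $\bar f$-score.'' A singleton class contributes the same amount on both sides because preservation of strict orderings ensures the corresponding observation has the same Subrank (hence the same ResolvedRank) in $\bar f$ and in $\hat f$. For a tied class of size $k = k_+ + k_-$ at common $\bar f$-Subrank $s$, the ResolvedRank rule placing tied negatives above tied positives forces the $k_+$ positives to receive precisely the ResolvedRanks $\{s, s+1, \ldots, s+k_+-1\}$, contributing $\sum_{j=1}^{k_+} a_{s+j}$ to $\GRR(\bar f)$. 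In $\hat f$, every observation outside the class remains strictly above or strictly below every member of it, so the class occupies the consecutive tie-free Subranks $\{s, s+1, \ldots, s+k-1\}$; the $k_+$ positives receive some $k_+$-subset of those, and by monotonicity of $(a_\ell)$ their contribution is at least the sum of the $k_+$ smallest such values, namely $\sum_{j=1}^{k_+} a_{s+j}$. Summing over classes yields $\GRR(\hat f) \geq \GRR(\bar f)$, and the reverse inequality holds by maximality of $\bar f$, so $\hat f$ is a tie-free maximizer of $\GRR$.

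With the hypothesis of Theorem~\ref{step1} thus satisfied, that theorem immediately delivers the conclusion: any $f^* \in \mathrm{argmax}_f \Gsub(f)$ lies in $\mathrm{argmax}_f \GRR(f)$.

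The main obstacle, and the only place the particular tie-breaking rule of ResolvedRank is used, is the rank-accounting argument for tied classes: the pessimistic placement of positives at the bottom of each tied class is exactly what minimizes the CLRS contribution of that class, so any tie-free refinement can only improve it. The no-duplicates hypothesis enters solely through Lemma~\ref{LemmaHighDim}, via the requirement that a random perturbation direction not be orthogonal to $\x_i - \x_k$ for any pair.
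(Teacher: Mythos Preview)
The proposal is correct and follows essentially the same approach as the paper's proof: perturb a maximizer $\bar f$ of $\GRR$ via Lemma~\ref{LemmaHighDim} into a tie-free $\hat f$, verify $\GRR(\hat f)\ge\GRR(\bar f)$ using the pessimistic tie-breaking rule for ResolvedRanks, and then appeal to Theorem~\ref{step1}. Your rank-accounting for each tied class (positives receive exactly the bottom $k_+$ ResolvedRanks $\{s,\ldots,s+k_+-1\}$ under $\bar f$, and some $k_+$-subset of $\{s,\ldots,s+k-1\}$ under $\hat f$) is simply a more explicit version of the paper's informal statement that ``some of the $a_\ell$'s of $\hat f$ become larger than those of $\bar f$,'' and your remark on existence of a maximizer fills a small gap the paper leaves implicit.
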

\begin{proof}
We will show that the assumption of Theorem~\ref{step1}, which says that $\GRR$ has a maximizer with no ties, is always true. This will give us the desired result. Let $\bar f\in\mathrm{argmax}_f\GRR(f)$. 
Either $\bar f$ has no ties already, in which case there is nothing to prove, or it does have ties. If so, we can take its vector $\bar \w$ and perturb it using Lemma \ref{LemmaHighDim}. The resulting vector $\hat{\w}$ has no ties. We need only to show that $\hat{\w}$ also maximizes $\GRR$. To do this we will show $\GRR(\hat{f})\geq\GRR(\bar{f})$.

We know that 
\begin{align*}
\GRR(\bar f) &= \sum_{i=1}^n y_i\sum_{\ell=1}^{n} \mathbf{1}_{[\mathrm{ResolvedRank}({\bar f}(\x_i))=\ell-1]}\cdot a_\ell -c\|\bar{\w}\|_0\\
&= \sum_{i\in S_+} a_{(\mathrm{ResolvedRank}({\bar f}(\x_i))+1)}-c\|\bar{\w}\|_0,\\
\GRR(\hat f) &= \sum_{i=1}^n y_i\sum_{\ell=1}^{n} \mathbf{1}_{[\mathrm{ResolvedRank}({\hat f}(\x_i))=\ell-1]}\cdot a_\ell -c\|\hat{\w}\|_0\\
&= \sum_{i\in S_+} a_{(\mathrm{ResolvedRank}({\hat f}(\x_i))+1)} -c\|\hat{\w}\|_0,
\end{align*}
and $\|\bar{\w}\|_0 = \|\hat{\w}\|_0$ by Lemma~\ref{LemmaHighDim}. We know that $a_1\leq a_2\leq\dots\leq a_n$. Thus, as long as the ResolvedRanks of the positive observations according to $\hat{f}$ are the same or higher than their ResolvedRanks according to $\bar{f}$, we are done.

Consider the untied observations of $\bar{f}$, which are $\{i:\bar{f}(\x_i)\neq \bar{f}(\x_k)$ for any $k$$\}$. Those observations have ResolvedRank$(\bar{f}(\x_i))$ = ResolvedRank$(\hat{f}(\x_i))$ by Lemma \ref{LemmaHighDim}(i) which says that all pairwise orderings are preserved.

What remains is to consider the tied observations of $\bar{f}$, which are $\{i:\bar{f}(\x_i)= \bar{f}(\x_k)$ for some $k$$\}$. Consider a set of tied observations $\x_{\alpha},...,\x_{\zeta}$ where $f(\x_{\alpha})=...=f(\x_{\zeta})$. If their labels are all equal, $y_{\alpha}=...=y_{\zeta}$, then regardless of how they are permuted to create the ResolvedRank in either $\bar{f}$ or $\hat{f}$, the total contribution of those observations to the $\GRR$ will be the same. If the labels in the set differ, then $\bar{f}$ assigns ResolvedRanks pessimistically, so that the negatives all have ResolvedRanks above the positive (according to the definition of ResolvedRanks). This means that by perturbing the solution, $\hat{f}$ could potentially increase the ranks of some of these tied positive observations. In that case, some of the $a_{\ell}$'s of $\hat{f}$ become larger than those of $\bar{f}$. Thus, $\GRR(\hat{f})\geq \GRR(\bar{f})$ and we are done.

\end{proof}

The result in Theorem \ref{step2} shows why optimizing $\Gsub$ is sufficient to obtain the maximizer of $\GRR$. This provides the underpinning for use of the Subrank formulation.

\section{Empirical Discussion of Learning-To-Rank}\label{sectionexperiments}
Through our experiments with the Subrank formulation, we made several observations, which we will present empirical results to support below. 
\subsubsection*{Observation 1: There are some datasets where reranking can substantially improve the quality of the solution.}

We present comparative results on the performance of several baseline ranking methods methods, namely Logistic Regression (LR), Support Vector Machines (SVM), RankBoost (RB), and the P-Norm Push for $p=2$ and for the Subrank MIP formulations at 4 different levels of the cutoff $K$ for reranking. For the SVM, we tried regularization parameters $10^{-1}$, $10^{-2}$, $\ldots$, $10^{-6}$ and reported the best result. We chose datasets with the right level of imbalance so that not all of the top observations belong to a single class; this ensures that the rank statistics are meaningful at the top of the list. 
We used several datasets that are suitable for the type of method we are proposing, namely:
\begin{itemize}
\item ROC Flexibility: This dataset is designed specifically to show differences in rank statistics \citep{RudinFlex}.  Note that this dataset has ties, but the ties do not seem to influence the quality of the solution. (It is generally possible in practice to use the Subrank formulation even in the case of ties.) ($n=500$, $d=5$)
\item \textcolor{black}{Abalone19: This dataset is an imbalanced version of the Abalone dataset where the positive observations belong to class 19. It is available from the KEEL repository \citep{KEEL}. It contains information about sex, length, height, and weight, and the goal is to determine the age of the abalone (19). ($n=4174$, $d=8$)}
\item UIS from the UMass Aids Research Unit \citep{HosmerLe00}: This dataset contains information about each patient's age, race, depression level at admission, drug usage, number of prior drug treatments, and current treatment, and the label represents whether the patient remained drug free for 12 months afterwards. ($n=575$, $d=8$)
\item Travel: This dataset is from a survey of transportation uses between three Australian cities \citep{HosmerLe00}. It contains information about what modes of traffic are used (e.g., public bus, airplane, train, car) which is what we aim to predict, and features include the travel time, waiting time at the terminal, the cost of transportation, the commuters' household income level, and the size of the party involved in the commute.  ($n=840$, $d=7$)
\item NHANES (physical activity):  This dataset contains health information about patients including physical activity levels, height, weight, age, gender, blood pressure, marital status, cholesterol, etc. \citep{HosmerLe00}. The goal is to predict whether the person is considered to be obese.  ($n=600$, $d=21$)
\item Pima Indians Diabetes, from the National Institute of Diabetes and Digestive and Kidney Diseases, available from the UCI Machine Learning Repository \citep{Bache+Lichman:2013}: The goal is to predict whether a woman will test positive for diabetes during her pregnancy, based on measurements of her blood glucose concentration in an oral glucose tolerance test, her blood pressure, body mass index, age, and other characteristics.  ($n=768$, $d=8$)
\item \textcolor{black}{Gaussians: This is a synthetic 2 dimensional dataset, with 1250 points subsampled from a population containing two big clumps of  training examples, each entry of each observation drawn from a normal distribution with variance 0.5, where the positive clump (3000 points) was generated with mean (0,1), and the negative clump (3000 points) was generated with mean
(0,0). These bigger clumps are designed to dominate the WRS. In addition, there is a smaller 10 point negative clump generated with mean (10,1) and noise components each drawn from a normal with standard deviation 0.05, and a positive clump of 200 points generated with
mean (0,-3) and noise drawn with standard deviation 0.05. Note that we do not expect the ``flipping" to occur here as it did in Section \ref{subsec:why} since we are using DCG, which is much more difficult to distinguish from WRS than a steeper rank statistic. ($n=1250$, $d=2$) }
\end{itemize}
For the MIP-based methods, we used logistic regression as the base ranker, and the reranker was learned from the top $K$. We varied $K$ between 50, 100, 150, and we also used the full list. 
An exception is made for the Abalone19 data set, for which $K$ varies between 250, 500 and 750 instead because Abalone19 is a highly imbalanced data set.
We stopped the computation after 2 hours for each trial (1 hour for the ROC flexibility dataset), which gives a higher chance for the lower-$K$ rerankers to solve to optimality. 
Most of the K=50 experiments for the ROC flexibility dataset solved to optimality within 5 minutes.
The reported means and standard deviations were computed over 10 randomly chosen training and test splits, where the same splits were used for all datasets.
We chose to evaluate according to the DCG measure as it is used heavily in information retrieval applications \citep{Jarvelin00}. \citet{AlMaskari07} report that DCG is similar to the way humans evaluate search results, based on a study of user satisfaction. We used $C=10^{-3}$ for the ROC Flexibility dataset, and $C=10^{-4}$ for the other datasets. Note that for the DCG measure in particular, it is difficult to see a large improvement; for instance even on the extreme experiment in Section \ref{subsec:why} the improvement in DCG from flipping the classifier completely upside down was only 16\%. 

Table \ref{Table:gooddatasets} shows the results of our experiments, where we highlighted the best algorithm for each dataset on both training and test in bold, and used italics to represent test set results that are not statistically significantly worse than the best algorithm according to a matched pairs t-test with significance level $\alpha=0.05$. In terms of predictive performance, the smaller $K$ models performed consistently well on these data, achieving the best test performance on all of these datasets. On some of the datasets, we see a $\sim$10\% average performance improvement from reranking. (The magnitude is not too much different as from the experiment in Section \ref{subsec:why} where the classifier flips upside down.)
On the Travel dataset in particular, the $K$=50 reranking model had superior results over all of the baselines uniformly across all 10 trials. 



\begin{table}[h!]
\begin{sideways}
\begin{minipage}{22cm}
\caption{Datasets for which reranking can make a difference\label{Table:gooddatasets}}

\begin{tabular}{ccccccccccc}
\hline\hline
& & \multicolumn{4}{c}{Baseline methods}& & \multicolumn{4}{c}{MIP-based methods}\\
\cline{3-6}
\cline{8-11}
\multicolumn{2}{c}{Dataset}& LR& SVM& RB& P-norm Push
& &$K=50$& $K=100$&  $K=150$& Full List\\
\hline
\multirow{2}{*}{ROC
}& train&$31.21\pm 1.65$ &$30.94\pm 1.57$ &$29.00\pm 1.39$ &$31.33\pm 31.43$&  &$\mathbf{31.96\pm 1.32}$ &$31.84\pm 1.36$ &$31.65\pm 1.12$  &$28.43\pm 1.80$ \\
&test&$31.35\pm 1.48$ &$31.10\pm 1.63$ &$29.57\pm 1.43$ &$31.43\pm 1.61$& &$\mathbf{32.16\pm 1.31}$ &$\mathit{32.09\pm 1.31}$ &$\mathit{31.74\pm 1.65}$ &$28.96\pm 2.40$ \\ 
\hline
\multirow{2}{*}{Abalone19
\footnote{We use $K=250$, $K=500$ and $K=750$ for this data set because it is highly imbalanced.}
}& train& $3.63\pm 0.43$& $3.41\pm 0.47$& $3.40\pm 0.65$& $3.44\pm 0.47$& & $\mathbf{4.89\pm 0.58}$& $4.45\pm 0.50$& $4.13\pm 0.53$& $2.54\pm 0.35$\\
& test& $2.96\pm 0.42$& $3.02\pm 0.53$& $2.66\pm 0.49$& $3.03\pm 0.50$& & $\mathbf{3.08\pm 0.49}$& $\mathit{2.89\pm 0.35}$& $\mathit{2.76\pm 0.38}$& $2.42\pm 0.48$\\
\hline
\multirow{2}{*}{UIS}& train& $18.86\pm 1.32$& $18.46\pm 1.38$& $19.44\pm 1.44$& $18.78\pm 1.40$& &$\mathbf{20.45\pm 1.23}$& $19.76\pm 1.27$& $19.26\pm 1.05$& $18.84\pm 1.44$\\
& test& $17.88\pm 1.11$& $17.81\pm 1.21$& $17.70\pm 1.40$& $\mathit{17.89\pm 1.13}$& & $\mathit{18.00\pm 1.31}$& $\mathbf{18.64\pm 1.51}$& $\mathit{17.79\pm 1.73}$& $\mathit{17.89\pm 0.67}$\\
\hline
\multirow{2}{*}{Travel}& train& $28.16\pm 1.60$& $27.59\pm 1.61$& $26.57\pm 1.60$& $28.09\pm 1.62$& & $\mathbf{28.30\pm 1.63}$& $28.24\pm 1.56$& $27.12\pm 1.45$& $26.94\pm 1.36$\\
& test& $27.32\pm 1.70$& $26.81\pm 1.76$& $24.95\pm 1.63$& $27.24\pm 1.66$& & $\mathbf{27.61\pm 1.70}$& $\mathit{27.39\pm 1.60}$& $26.00\pm 2.26$& $26.31\pm 1.83$\\ 
\hline
\multirow{2}{*}{NHANES
}& train& $14.69\pm 1.63$& $13.83\pm 1.87$& $13.75\pm 2.01$& $14.46\pm 1.57$& & $\mathbf{15.48\pm 1.69}$& $15.02\pm 1.93$& $14.73\pm 1.59$& $13.87\pm 1.25$\\
& test& $\mathit{13.06\pm 1.74}$& $\mathit{12.98\pm 1.79}$& $12.10\pm 1.75$& $\mathit{13.18\pm 1.82}$& &$\mathbf{13.26\pm 1.50}$& $\mathit{12.71\pm 1.61}$& $\mathit{12.94\pm 1.55}$& $\mathit{13.09\pm 1.82}$\\
\hline
\multirow{2}{*}{Pima}& train& $35.50\pm 1.66$& $35.30\pm 1.61$& $\mathbf{35.80\pm 1.44}$& $35.64\pm 1.67$& & $35.75\pm 1.67$& $35.43\pm 1.69$& $34.85\pm 1.96$& $34.77\pm 2.03$\\
& test& $\mathit{34.18\pm 1.81}$& $\mathit{34.03\pm 1.83}$& $33.83\pm 1.65$& $\mathit{34.24\pm 1.82}$& & $\mathbf{34.44\pm 1.76}$& $33.56\pm 1.87$& $33.72\pm 2.21$& $\mathit{33.65\pm 2.01}$\\
\hline
\multirow{2}{*}{Gaussians}& train& $69.25\pm 2.70$&$69.28\pm 2.70$ & $71.31\pm 2.15$& $69.24\pm 2.70$& & $71.71\pm 2.22$& $\mathbf{71.76\pm 2.18}$& $71.58\pm 2.29$& $64.70\pm 2.53$\\
& test& $64.69\pm 2.45$&$64.73\pm 2.45$ & $67.13\pm 2.06$& $64.65\pm 2.43$& & $\mathbf{68.03\pm 2.27}$& $\mathit{67.91\pm 2.27}$& $\mathit{67.79\pm 2.30}$& $59.89\pm 2.26$\\
\hline\hline
\end{tabular}
\end{minipage}
\end{sideways}
\begin{sideways}
\begin{minipage}{17cm}
\vspace*{20pt}
\caption{Datasets for which reranking does not make a difference\label{Table:baddatasets}}
\begin{tabular}{ccccccccccc}
\hline\hline
& & \multicolumn{4}{c}{Baseline methods}& & \multicolumn{4}{c}{MIP-based methods}\\
\cline{3-6}
\cline{8-11}
\multicolumn{2}{c}{Dataset}& LR& SVM& RB& P-norm Push& &$K=50$& $K=100$& $K=150$& Full MIO\\
\hline
\multirow{2}{*}{Haberman}& train& $12.94\pm 1.07$& $12.95\pm 1.06$& $\mathbf{13.95\pm 1.24}$& $12.94\pm 1.06$& & $13.10\pm 1.19$& $13.02\pm 1.20$& ---\footnote{There are only 153 observations in the training data for the Haberman survival dataset. We did not do $K=150$ because at that point it makes sense to run the MIP on the full dataset.} & $13.13\pm 0.92$\\ 
& test& $\mathbf{12.82\pm 1.15}$& $\mathit{12.63\pm 1.15}$& $12.01\pm 1.06$& $\mathit{12.64\pm 1.09}$& & $\mathit{12.64\pm 1.47}$& $\mathit{12.80\pm 1.13}$& ---& $12.46\pm 1.05$\\
\hline
\multirow{2}{*}{Polypharm}& train& $\mathbf{19.43\pm 1.42}$& $19.24\pm 1.53$& $19.11\pm 1.55$& $18.73\pm 1.25$& & $19.21\pm 0.93$& $18.63\pm 1.25$&$18.04\pm 1.04$ & $18.16\pm 1.36$\\
& test& $\mathit{17.23\pm 1.63}$& $\mathit{17.71\pm 1.56}$& $\mathit{17.40\pm 1.70}$& $\mathbf{18.05\pm 1.33}$& & $17.37\pm 1.33$& $\mathit{17.59\pm 1.25}$&$\mathit{17.16\pm 1.99}$ & $17.22\pm 1.57$\\
\hline
\multirow{2}{*}{Glow500}& train& $17.26\pm 1.16$& $16.52\pm 1.41$& $17.23\pm 1.25$& $17.02\pm 1.44$& & $\mathbf{18.20\pm 1.24}$& $17.70\pm 1.22$& $17.08\pm 1.40$& $16.69\pm 1.28$\\
& test& $16.68\pm 1.18$& $\mathbf{17.26\pm 1.15}$& $\mathit{17.01\pm 0.69}$& $\mathit{17.24\pm 1.27}$& & $\mathit{16.79\pm 1.09}$& $\mathit{17.10\pm 1.66}$& $\mathit{16.78\pm 1.33}$& $16.48\pm 1.35$\\
\hline\hline
\end{tabular}
\end{minipage}
\end{sideways}
\end{table}

The work of \citet{ChangEtAl2012} shows the benefits of carrying the computation to optimality on a specialized application of MIP learning-to-rank for reverse-engineering product quality rankings.

\subsubsection*{Observation 2: There is a tradeoff between computation and quality of solution.} 
If the number of elements to rerank (denoted by $K$) is too small, the solution will not generalize as well. Theoretical results of \cite{Rudin09} suggest that there is a tradeoff between how well we can generalize and how much the rank statistic is focused on the top of the ranked list. The main result of that work shows that if the rank statistic concentrates very much at the top of the list (like, for instance, the mean reciprocal rank statistic) then we require more observations in order to generalize well. If the number of observations is too small, learning-to-rank methods may not be beneficial over traditional learning methods like logistic regression. Further, if the number of observations is too small, then the variation from training to test will be much larger than the gain in training error from using the correct rank statistic; again in that case, learning-to-rank would not be beneficial.

If the number of elements $K$ is too large, we will not be able to sufficiently solve the reranking problem within the allotted time, and the solution again could suffer. This reinforces our point that we should not refrain from solving hard problems, particularly on the scale of reranking, but certain hard problems are harder than others and the computation needs to be done carefully. 

Again consider Table \ref{Table:gooddatasets}. Note that the $K=50$ and $K=100$ rerankers perform consistently well on these datasets, both in training and in testing. However, if $K$ is set too large, the optimization on the training set will not be able to be solved close to optimality in the allotted time, and the quality of the solution will be degraded. This is an explicit tradeoff between computation and the quality of the solution.

\subsubsection*{Observation 3: There are some datasets for which the variance of the result is larger than the differences in the rank statistics themselves.}
These are cases where better relative training values do not necessarily lead to better relative test values. In these cases we do not think it is worthwhile to use ranking algorithms at all, let alone \textit{reranking} algorithms. For these datasets, logistic regression may suffice. The cases where reranking/ranking makes a difference are cases where the variance of the training and test values are small enough that we can reliably distinguish between the different rank statistics.

We present results on three datasets in Table \ref{Table:baddatasets}, computed in the same way as the results in Table \ref{Table:gooddatasets}, for which various things have gone wrong, such as the optimizer not being able to achieve the best result on the training set, but even worse, the results are inconsistent between training and test. The algorithm that optimizes best over the training set is not the same algorithm that achieves the best out-of-sample quality. These are cases where the algorithms do not generalize well enough so that a ranking algorithm is needed. The datasets used here are the Haberman survival dataset from the UCI Machine Learning Repository \citep{Bache+Lichman:2013} ($n=300$, $d=3$), Poly-pharmacy study on drug consumption \citep{HosmerLe00} ($n=500$, $d=13$), and data from the GLOW study on fracture risk \citep{HosmerLe00} ($n=500$, $d=14$).

\subsubsection*{Observation 4: As long as the margin parameter $\varepsilon$ is sufficiently small without being too small so that the solver will not recognize it, the quality of the solution is maintained. The regularization parameter $C$ also can have an influence on the quality of the solution, and it is useful not to over-regularize or under-regularize.}

Note that if $\varepsilon$ is too large, the solver will not be able to force all of the inequalities to be strictly satisfied with margin $\varepsilon$. This could force many good solutions to be considered infeasible and this may ruin the quality of the solution. It could also cause problems with convergence of the optimization problem. When $\varepsilon$ is smaller, it increases the size of the feasible solution space, so the problem is easier to solve. On the other hand, if $\varepsilon$ is too small, the solver will have trouble recognizing the inequality and may have numerical problems. 

In Table \ref{TableEpsi} we show what happens when the value of $\varepsilon$ is varied on two of our datasets. 
We can see from Table \ref{TableEpsi} that as $\varepsilon$ decreases by orders of magnitude the solution generally improves, but then at some point degrades. For the ROC Flexibility data, the $\varepsilon=10^{-5}$ setting consistently performed better than the $\varepsilon=10^{-6}$ setting over all 10 trials in both training and test. A similar observation holds for UIS, in that the $\varepsilon=10^{-5}$ setting was able to optimize better than the $\varepsilon=10^{-6}$ setting over all 10 trials on the training set.

\begin{table}[h!]
\centering
\caption{Different selections of $\varepsilon$\label{TableEpsi}}
\begin{tabular}{cccccccc}
\hline\hline
\multicolumn{2}{c}{Dataset}& $10^{-1}$& $10^{-2}$& $10^{-3}$& $10^{-4}$& $10^{-5}$& $10^{-6}$\\
\hline
\multirow{2}{*}{ROC\footnotemark}& train&$31.62\pm 1.25$ &$31.85\pm 1.26$ & $31.93\pm 1.36$& $31.84\pm 1.36$& $32.02\pm 1.30$& $31.58\pm 1.26$\\
& test&$31.59\pm 2.07$ &$31.91\pm 1.57$ & $32.10\pm 1.28$& $32.09\pm 1.31$& $32.21\pm 1.32$& $31.74\pm 1.26$\\
\hline
\multirow{2}{*}{UIS}& train&$19.70\pm 1.23$ & $19.73\pm 1.24$& $19.80\pm 1.09$& $19.76\pm 1.27$& $19.73\pm 1.17$& $19.08\pm 1.40$\\
& test&$18.40\pm 1.09$ & $18.03\pm 1.06$& $18.34\pm 1.20$& $18.64\pm 1.51$& $17.88\pm 1.66$& $18.23\pm 0.75$\\
\hline\hline
\end{tabular}
\end{table}
\footnotetext{{The regularization constant $C$ is set to $10^{-3}$ for this dataset.}}

Table \ref{TableRegulariz} shows the training and test performance as the regularization parameter $C$ is varied over several orders of magnitude. As one would expect, a small amount of regularization helps performance, but too much regularization hurts performance as we start to sacrifice prediction quality for sparseness.

\begin{table}[h!]
\centering
\caption{Training and test performance for varying values of regularization parameter $C$. \label{TableRegulariz}}
\begin{tabular}{cccccccc}
\hline\hline
\multicolumn{2}{c}{Dataset}& $C=10^{-1}$& $C=10^{-2}$& $C=10^{-3}$& $C=10^{-4}$& $C=10^{-5}$& $C=10^{-6}$\\
\hline
\multirow{2}{*}{ROC}& train& $31.31\pm 1.52$& $31.31\pm 1.72$& $31.84\pm 1.36$& $31.94\pm 1.20$& $32.02\pm 1.30$&$31.86\pm 1.35$ \\
& test& $31.35\pm 1.48$& $31.30\pm 1.57$& $32.09\pm 1.31$& $32.06\pm 1.53$& $32.21\pm 1.32$&$32.01\pm 1.35$ \\
\hline
\multirow{2}{*}{UIS}& train& $19.15\pm 1.24$& $19.36\pm 1.02$& $19.69\pm 1.01$& $19.76\pm 1.27$& $19.89\pm 1.24$& $19.68\pm 1.08$\\
& test& $17.94\pm 1.11$& $18.15\pm 1.54$& $17.92\pm 1.57$& $18.64\pm 1.51$& $17.94\pm 1.12$& $17.74\pm 1.76$\\
\hline\hline
\end{tabular}
\end{table}

\subsubsection*{Observation 5: Proving optimality takes longer than finding a reasonable solution.} 
Figures \ref{ROCTime} shows the objective values and the upper bound on the optimality gap over time for four folds of the ROC Flexibility dataset, where $K$ is 100 and $C$ is $10^{-4}$. Figure \ref{UISTime} shows the analogous plots for the UIS dataset. Usually a good solution is found within a few minutes, whereas proving optimality of the solution takes much longer. We do not require a proof of optimality to use the solution.

\begin{figure}\caption{\label{ROCTime}Objective values and optimality gap over time for ROC Flexibility dataset}
\begin{tabular}{c|c}
Fold 1 & Fold 2\\
\includegraphics[width=2.5in,height=2in]{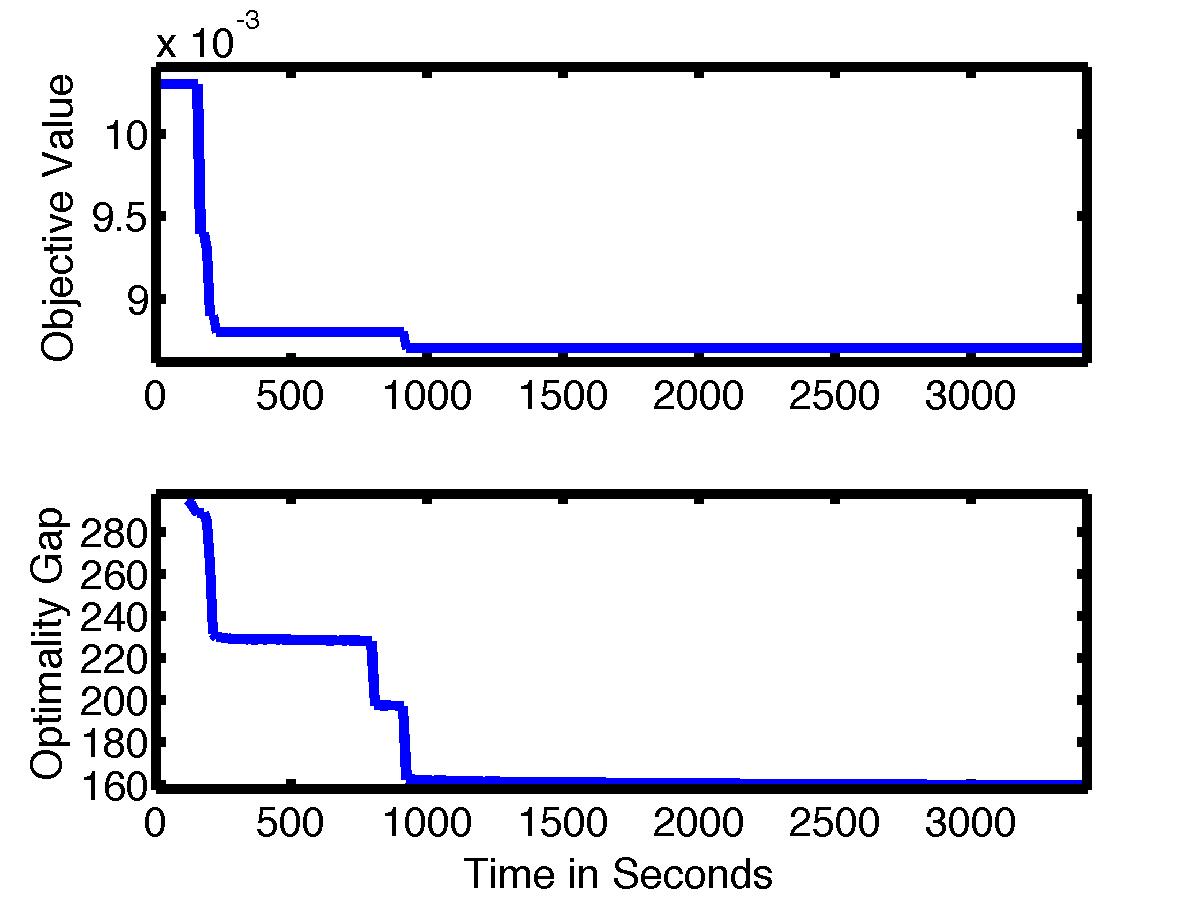}&
\includegraphics[width=2.5in,height=2in]{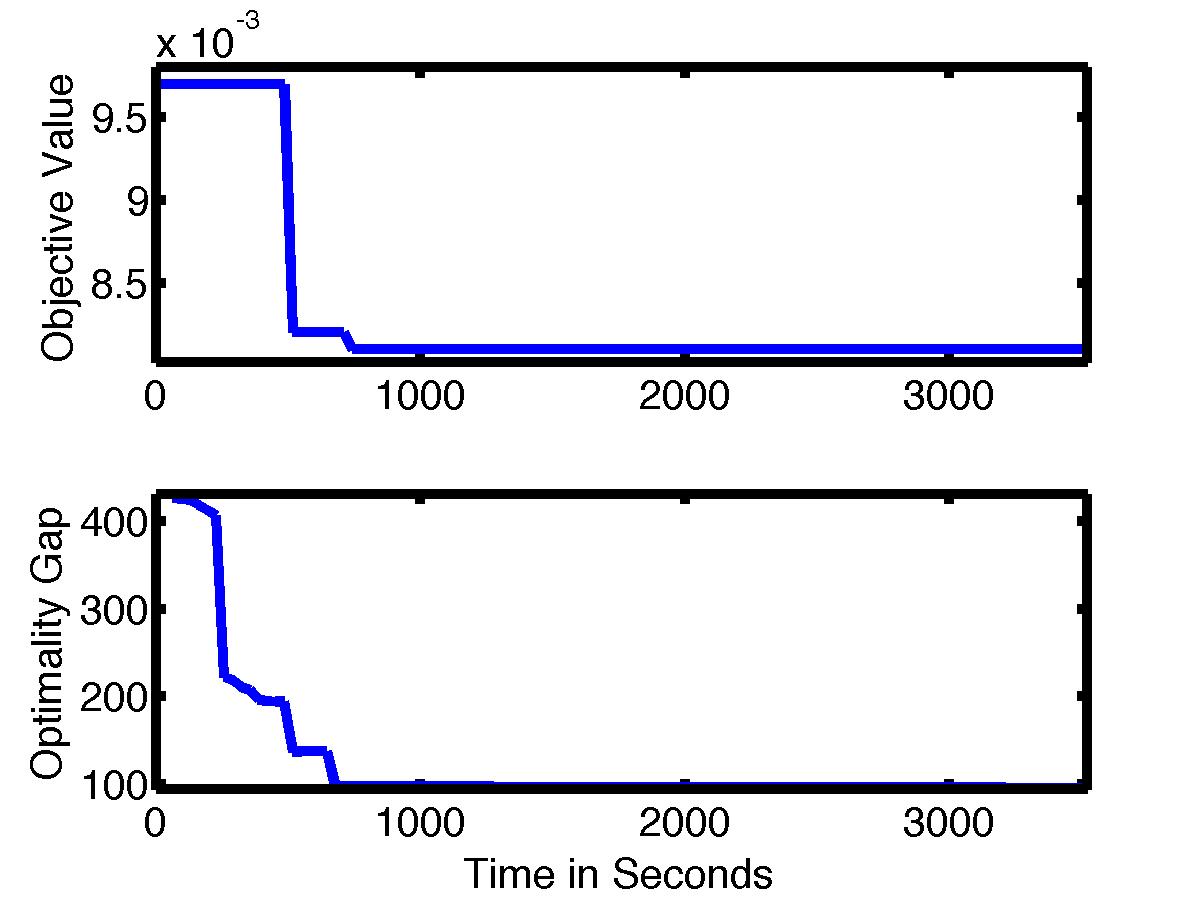}\\
\hline\\
Fold 3 & Fold 4\\
\includegraphics[width=2.5in,height=2in]{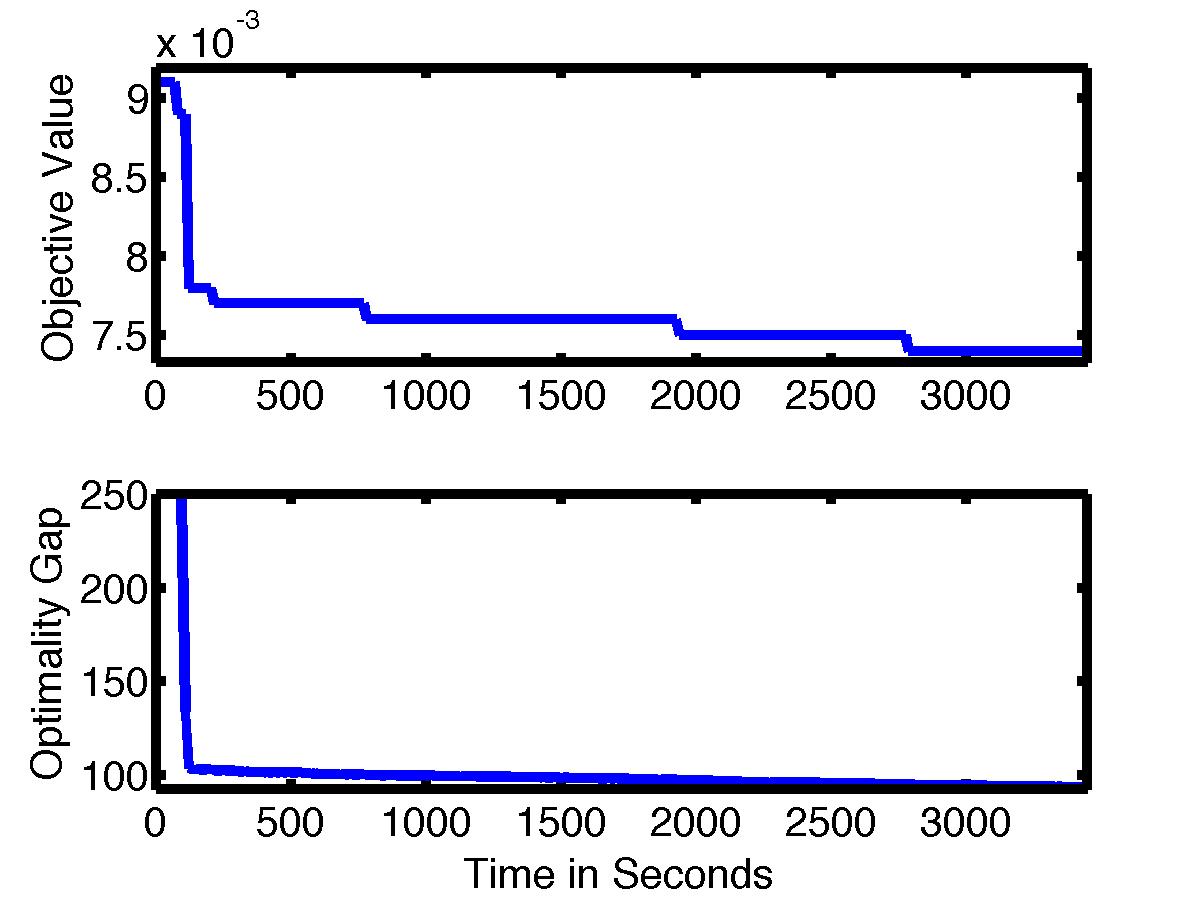}&
\includegraphics[width=2.5in,height=2in]{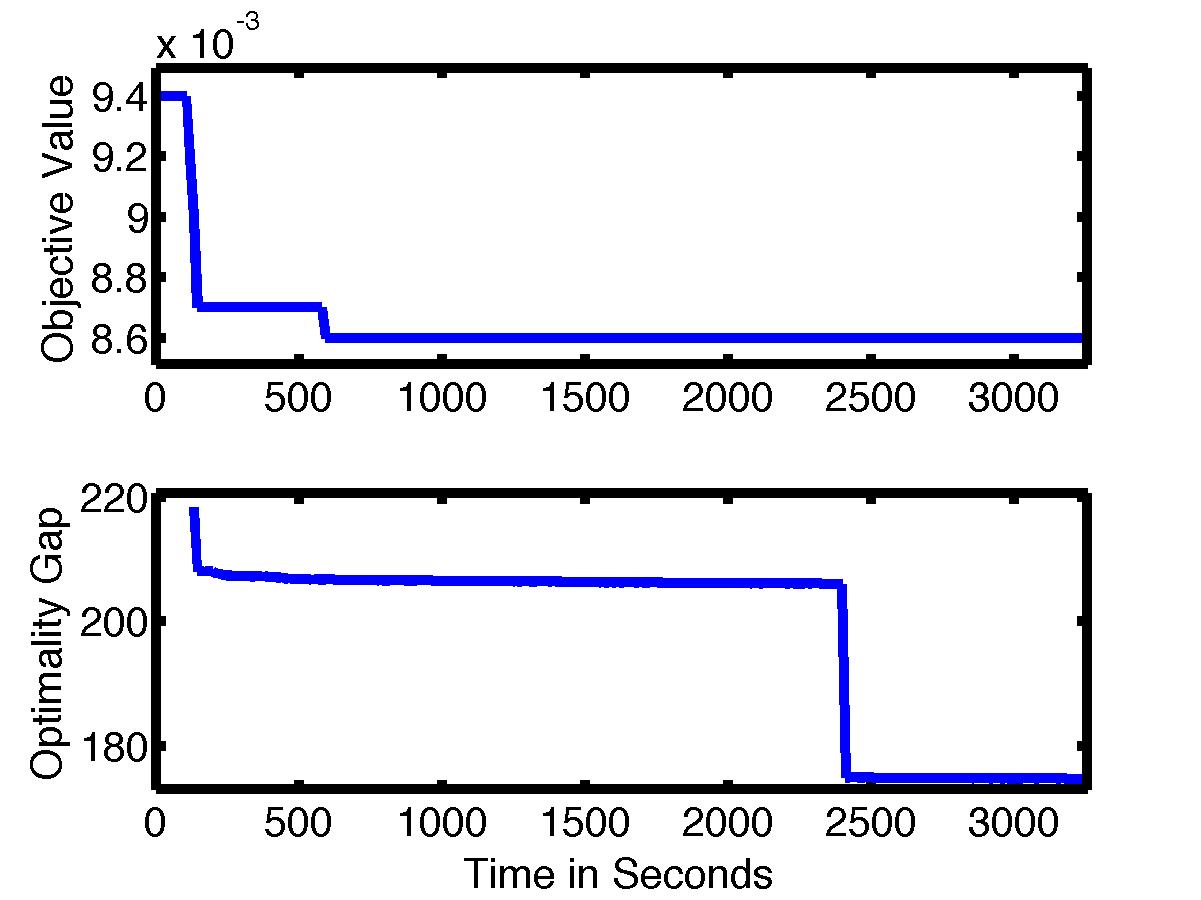}
\end{tabular}
\end{figure}

\begin{figure}\caption{\label{UISTime}Objective values and optimality gap over time for UIS dataset}
\begin{tabular}{c|c}
Fold 1 & Fold 2\\
\includegraphics[width=2.5in,height=2in]{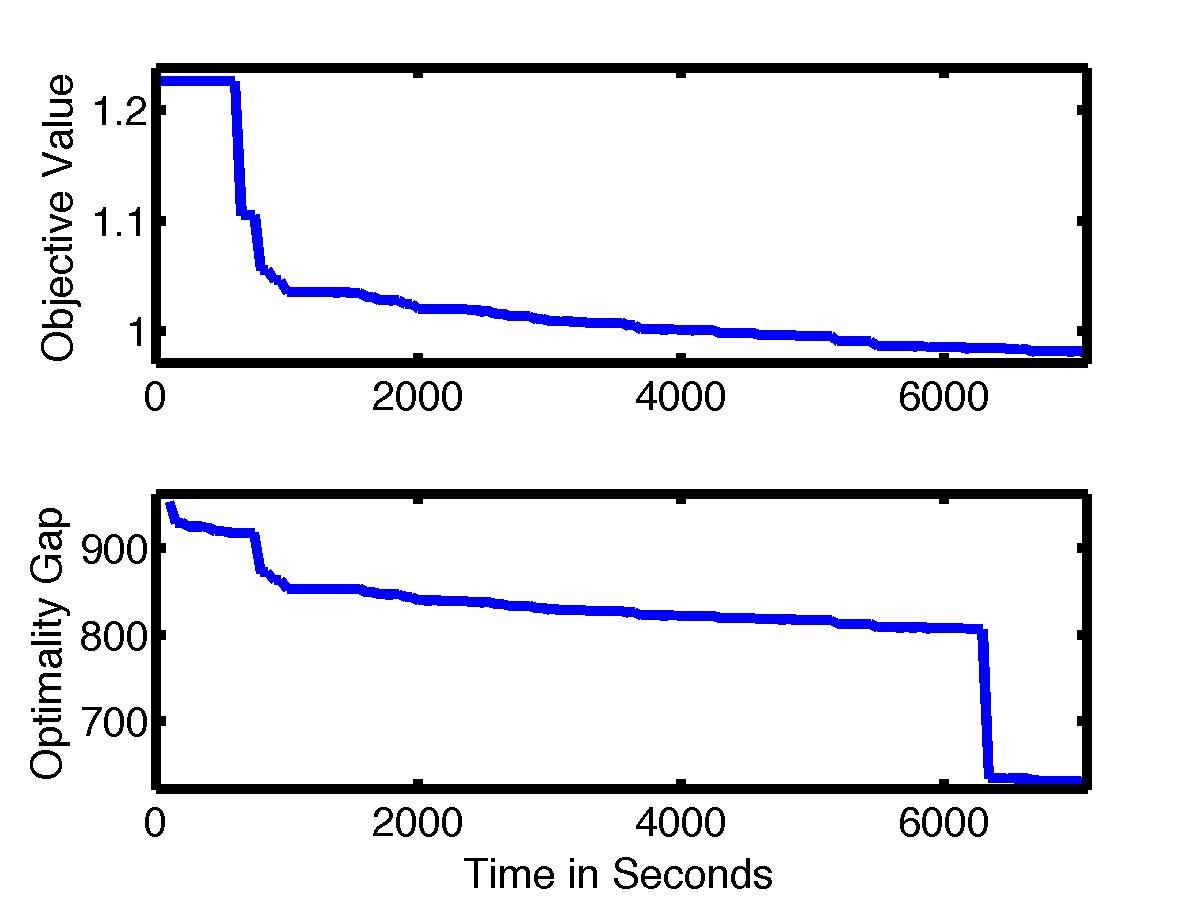}&
\includegraphics[width=2.5in,height=2in]{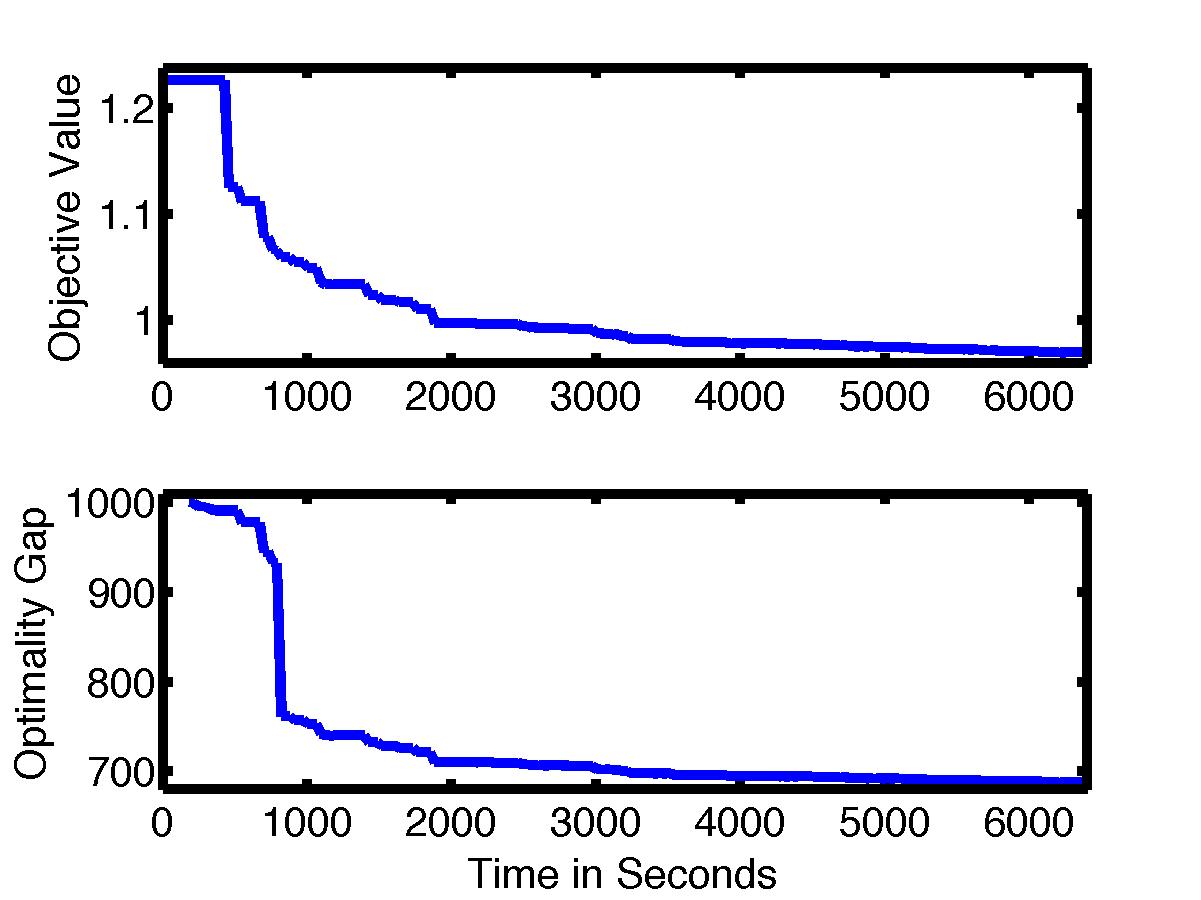}\\
\hline\\
Fold 3 & Fold 4\\
\includegraphics[width=2.5in,height=2in]{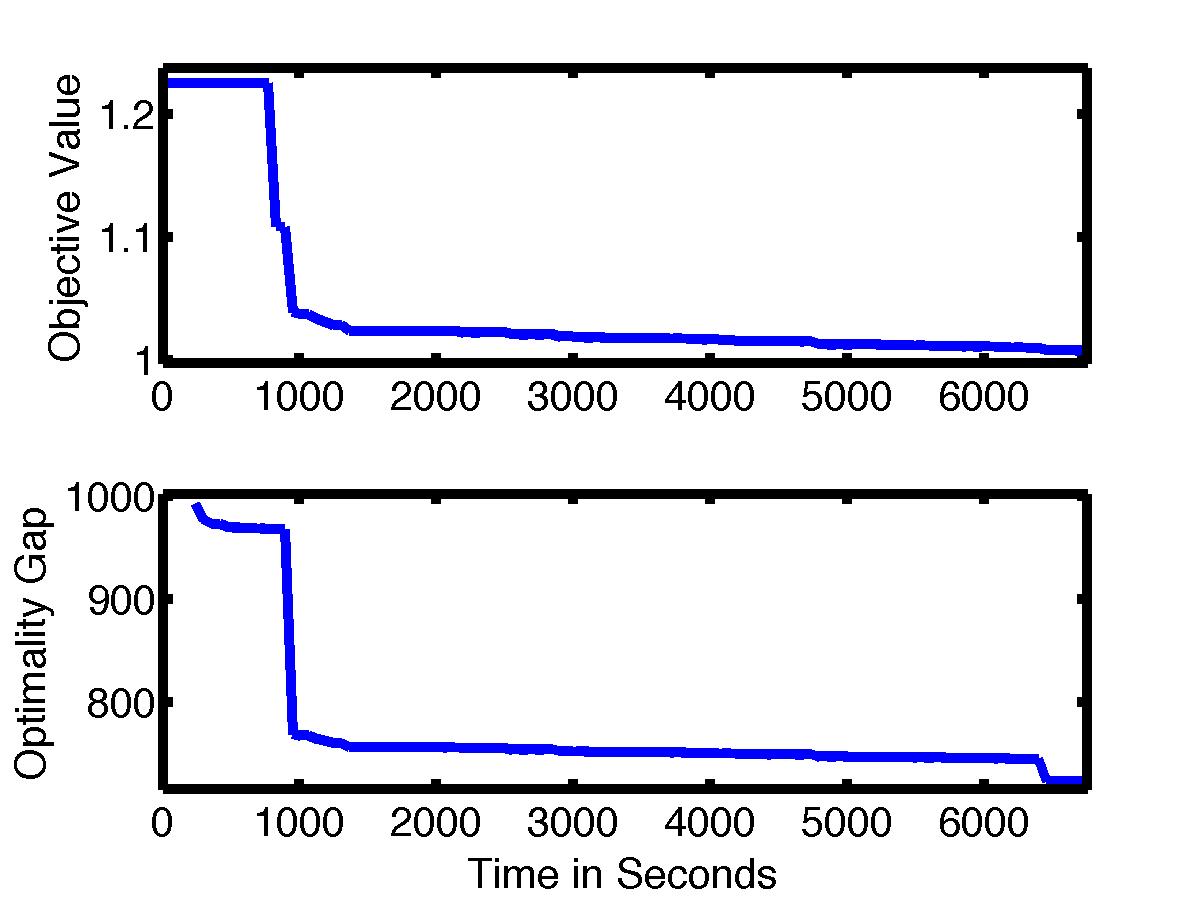}&
\includegraphics[width=2.5in,height=2in]{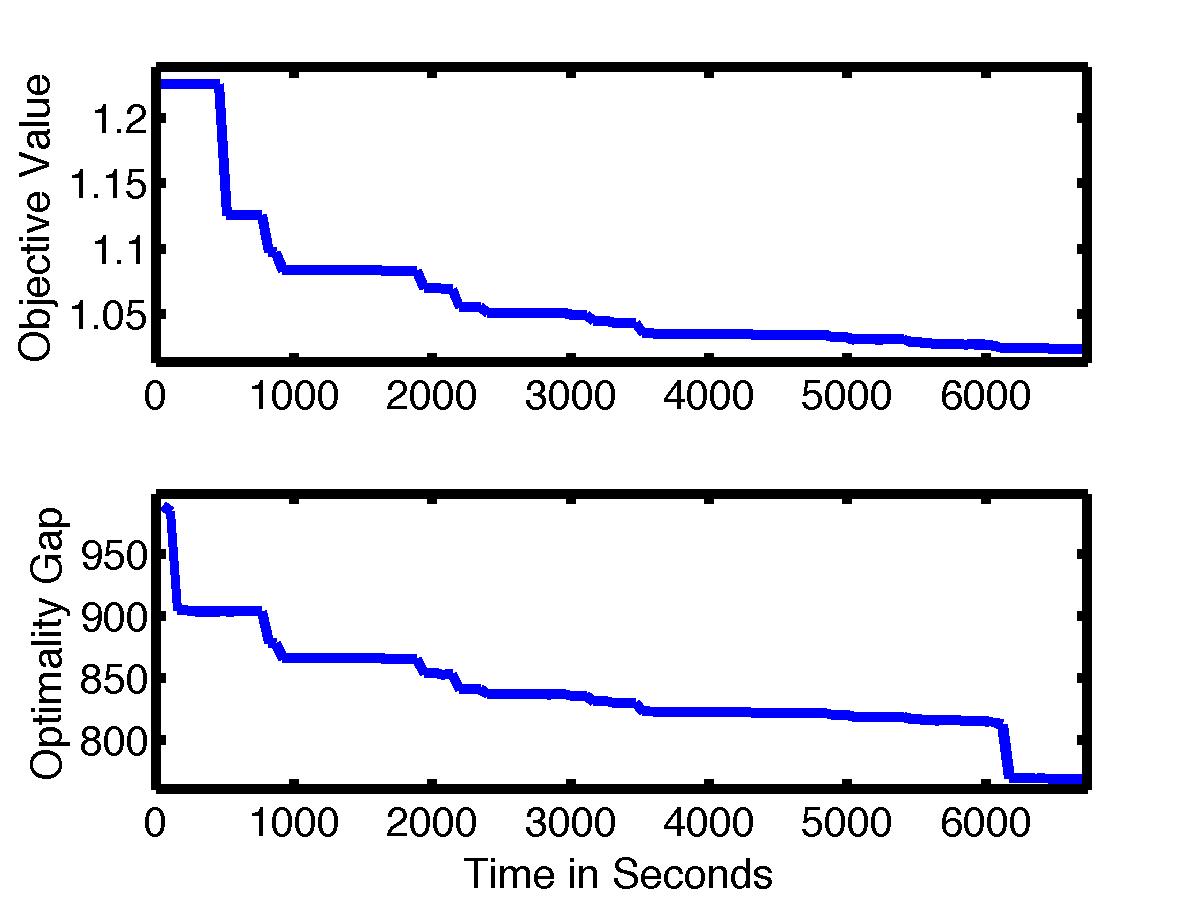}
\end{tabular}
\end{figure}


\section{Conclusion}
\label{conclusion}
As shown through our discussion, using a computationally expensive reranking step may help to improve the quality of the solution for reranking problems. This can be useful in application domains such as maintenance prioritization and drug discovery where the extra time spent in obtaining the best possible solution can be very worthwhile. We proved an analytical reduction from the problem that we really want to solve (the ResolvedRank formulation) to a much more computationally tractable problem (the Subrank formulation). Through an experimental discussion, we explicitly showed the tradeoff between computation and the quality of the solution. 

\section*{Acknowledgements} We would like to thank Allison Chang, our co-author on a technical report that inspired this work.
This research was supported by the National Science Foundation under Grant No IIS-1053407 to C. Rudin, and an undergraduate exchange fellowship for Y. Wang through the Tsinghua-MIT-CUHK Research Center for Theoretical Computer Science.

\bibliographystyle{spbasic}
\bibliography{RankingMIP2}

\begin{thebibliography}{63}
\providecommand{\natexlab}[1]{#1}
\providecommand{\url}[1]{{#1}}
\providecommand{\urlprefix}{URL }
\expandafter\ifx\csname urlstyle\endcsname\relax
  \providecommand{\doi}[1]{DOI~\discretionary{}{}{}#1}\else
  \providecommand{\doi}{DOI~\discretionary{}{}{}\begingroup
  \urlstyle{rm}\Url}\fi
\providecommand{\eprint}[2][]{\url{#2}}

\bibitem[{Al-Maskari et~al(2007)Al-Maskari, Sanderson, and
  Clough}]{AlMaskari07}
Al-Maskari A, Sanderson M, Clough P (2007) The relationship between {IR}
  effectiveness measures and user satisfaction. In: Proceedings of SIGIR '07:
  Proceedings of the 30th International ACM SIGIR Conference on Research and
  Development in Information Retrieval, ACM, pp 773--774

\bibitem[{Alcal\'{a}-Fdez et~al(2011)Alcal\'{a}-Fdez, Fernandez, Luengo,
  J.~Derrac, S\'{a}nchez, and Herrera}]{KEEL}
Alcal\'{a}-Fdez J, Fernandez A, Luengo J, J~Derrac SG, S\'{a}nchez L, Herrera F
  (2011) {KEEL} data-mining software tool: Data set repository, integration of
  algorithms and experimental analysis framework. Journal of Multiple-Valued
  Logic and Soft Computing 17(2--3):255--287

\bibitem[{Ataman et~al(2006)Ataman, Street, and Zhang}]{Ataman06}
Ataman K, Street WN, Zhang Y (2006) Learning to rank by maximizing {AUC} with
  linear programming. In: Proc. International Joint Conference on Neural
  Networks {(IEEE IJCNN)}

\bibitem[{Bache and Lichman(2013)}]{Bache+Lichman:2013}
Bache K, Lichman M (2013) {UCI} machine learning repository.
  \urlprefix\url{http://archive.ics.uci.edu/ml}

\bibitem[{Bradley(1997)}]{Bradley97}
Bradley AP (1997) The use of the area under the {ROC} curve in the evaluation
  of machine learning algorithms. Pattern Recognition 30(7):1145--1159

\bibitem[{Brooks(2010)}]{Brooks10}
Brooks JP (2010) Support vector machines with the ramp loss and the hard margin
  loss. Operations Research 59(2):467 -- 479

\bibitem[{Burges et~al(2005)Burges, Shaked, Renshaw, Lazier, Deeds, Hamilton,
  and Hullender}]{Burges05}
Burges C, Shaked T, Renshaw E, Lazier A, Deeds M, Hamilton N, Hullender G
  (2005) Learning to rank using gradient descent. In: Proc. 22nd International
  Conference on Machine Learning (ICML)

\bibitem[{Burges et~al(2006)Burges, Ragno, and Le}]{BurgesETAL06}
Burges CJ, Ragno R, Le QV (2006) Learning to rank with nonsmooth cost
  functions. In: Proc. Advances in Neural Information Processing Systems
  (NIPS), pp 395--402

\bibitem[{Cao et~al(2007)Cao, Qin, Liu, Tsai, and Li}]{CaoETAL07}
Cao Z, Qin T, Liu TY, Tsai MF, Li H (2007) Learning to rank: from pairwise
  approach to listwise approach. In: Proc. 24th International Conference on
  Machine Learning (ICML), ACM, New York, NY, USA, pp 129--136

\bibitem[{Caruana et~al(1996)Caruana, Baluja, and Mitchell}]{Caruana96}
Caruana R, Baluja S, Mitchell T (1996) Using the future to ``sort out" the
  present: Rankprop and multitask learning for medical risk evaluation. In:
  Advances in Neural Information Processing Systems (NIPS), vol~8, pp 959--965

\bibitem[{Chakrabarti et~al(2008)Chakrabarti, Khanna, Sawant, and
  Bhattacharyya}]{Chak08}
Chakrabarti S, Khanna R, Sawant U, Bhattacharyya C (2008) Structured learning
  for non-smooth ranking losses. In: Proc. 14th ACM SIGKDD international
  conference on Knowledge discovery and data mining (KDD), pp 88--96

\bibitem[{Chang et~al(2010)Chang, Rudin, and Bertsimas}]{BerChaRud2010}
Chang A, Rudin C, Bertsimas D (2010) A discrete optimization approach to
  supervised ranking. In: Proc. INFORMS 5th Annual Workshop on Data Mining and
  Health Informatics

\bibitem[{Chang et~al(2011)Chang, Rudin, and Bersimas}]{BertsimasChRuOR38811}
Chang A, Rudin C, Bersimas D (2011) Integer optimization methods for supervised
  ranking. Operations Research Center Working Paper Series OR 388-11, MIT

\bibitem[{Chang et~al(2012)Chang, Rudin, Cavaretta, Thomas, and
  Chou}]{ChangEtAl2012}
Chang A, Rudin C, Cavaretta M, Thomas R, Chou G (2012) How to reverse-engineer
  quality rankings. Machine Learning 88:369--398

\bibitem[{Clemen\c{c}on and Vayatis(2007)}]{Clemencon07}
Clemen\c{c}on S, Vayatis N (2007) Ranking the best instances. Journal of
  Machine Learning Research 8:2671--Â2699

\bibitem[{Clemen\c{c}on and Vayatis(2008)}]{Clemencon08}
Clemen\c{c}on S, Vayatis N (2008) Empirical performance maximization for linear
  rank statistics. Proc Advances in Neural Information Processing Systems
  (NIPS) 21

\bibitem[{Clemen\c{c}on et~al(2008)Clemen\c{c}on, Lugosi, and
  Vayatis}]{Clemencon08_2}
Clemen\c{c}on S, Lugosi G, Vayatis N (2008) Ranking and empirical minimization
  of u-statistics. Annals of Statistics 36(2):844--874

\bibitem[{Collins and Koo(2005)}]{Collins03}
Collins M, Koo T (2005) Discriminative reranking for natural language parsing.
  Journal of Association for Computational Linguistics 31(1):25--70

\bibitem[{Cooper et~al(1994)Cooper, Chen, and Gey}]{Cooper94}
Cooper WS, Chen A, Gey FC (1994) Full text retrieval based on probabilistic
  equations with coefficients fitted by logistic regression. In: Proc. 2nd Text
  Retrieval Conference (TREC-2), pp 57Â--66

\bibitem[{Crammer et~al(2001)Crammer, Singer et~al}]{crammer2001pranking}
Crammer K, Singer Y, et~al (2001) Pranking with ranking. In: Advances in Neural
  Information Processing Systems (NIPS), vol~1, pp 641--647

\bibitem[{Dodd and Pepe(2003)}]{Dodd03}
Dodd LE, Pepe MS (2003) Partial {AUC} estimation and regression. UW
  Biostatistics Working Paper Series
  \urlprefix\url{http://www.bepress.com/uwbiostat/paper181}

\bibitem[{Ertekin and Rudin(2011)}]{ErtekinRu11}
Ertekin {\c{S}}, Rudin C (2011) On equivalence relationships between
  classification and ranking algorithms. Journal of Machine Learning Research
  12:2905--2929

\bibitem[{Fine et~al(1997)Fine, Auble, Yealy, Hanusa, Weissfeld, Singer, Coley,
  Marrie, and Kapoor}]{Fine97}
Fine MJ, Auble TE, Yealy DM, Hanusa BH, Weissfeld LA, Singer DE, Coley CM,
  Marrie TJ, Kapoor WN (1997) A prediction rule to identify low-risk patients
  with community-acquired pneumonia. The New England Journal of Medicine pp
  243--250

\bibitem[{Freund et~al(2003)Freund, Iyer, Schapire, and Singer}]{Freund03}
Freund Y, Iyer R, Schapire RE, Singer Y (2003) An efficient boosting algorithm
  for combining preferences. Journal of Machine Learning Research 4:933--969

\bibitem[{Herbrich et~al(1999)Herbrich, Graepel, and
  Obermayer}]{herbrich1999large}
Herbrich R, Graepel T, Obermayer K (1999) Large margin rank boundaries for
  ordinal regression. NIPS pp 115--132

\bibitem[{Herbrich et~al(2000)Herbrich, Graepel, and Obermayer}]{Herbrich00}
Herbrich R, Graepel T, Obermayer K (2000) Large margin rank boundaries for
  ordinal regression. Advances in Large Margin Classifiers

\bibitem[{Hosmer et~al(2013)Hosmer, Lemeshow, and Sturdivant}]{HosmerLe00}
Hosmer DW, Lemeshow S, Sturdivant RX (2013) Applied Logistic Regression: Third
  Edition. John Wiley \& Sons Inc.

\bibitem[{Jain and Varma(2011)}]{JainVa11}
Jain V, Varma M (2011) Learning to re-rank: Query-dependent image re-ranking
  using click data. In: Proc. 20th International Conference on World Wide Web
  (WWW), pp 277--286

\bibitem[{J\"{a}rvelin and Kek\"{a}l\"{a}inen(2000)}]{Jarvelin00}
J\"{a}rvelin K, Kek\"{a}l\"{a}inen J (2000) {IR} evaluation methods for
  retrieving highly relevant documents. In: Proc. 23rd Annual International ACM
  SIGIR Conference on Research and Development in Information Retrieval, pp
  41--48

\bibitem[{Ji et~al(2006)Ji, Rudin, and Grishman}]{Ji06}
Ji H, Rudin C, Grishman R (2006) Re-ranking algorithms for name tagging. In:
  Proc. HLT-NAACL Workshop on Computationally Hard Problems and Joint Inference
  in Speech and Language Processing, pp 49--56

\bibitem[{Joachims(2002)}]{Joachims2002c}
Joachims T (2002) Optimizing search engines using clickthrough data. In: Proc.
  Eighth ACM SIGKDD International Conference on Knowledge Discovery and Data
  Mining {(KDD)}

\bibitem[{Kang et~al(2011)Kang, Wang, Chen, Liao, Chang, Tseng, and
  Zheng}]{KangEtAl11}
Kang C, Wang X, Chen J, Liao C, Chang Y, Tseng B, Zheng Z (2011) Learning to
  re-rank web search results with multiple pairwise features. In: Proc. Fourth
  International Conference on Web Search and Web Data Mining (WSDM)

\bibitem[{Kessler and Nicolov(2009)}]{KesslerNi09}
Kessler JS, Nicolov N (2009) Targeting sentiment expressions through supervised
  ranking of linguistic configurations. In: Proceedings of the Third
  International Conference on Weblogs and Social Media (ICWSM)

\bibitem[{Kong and Saar-Tsechansky(2013)}]{KongSa13}
Kong D, Saar-Tsechansky M (2013) Collaborative information acquisition for
  data-driven decisions. Machine Learning, Special Issue on ML for Science and
  Society

\bibitem[{Kotlowski et~al(2011)Kotlowski, Dembczynski, and
  Hullermeier}]{KotlowskiDeHu11}
Kotlowski W, Dembczynski K, Hullermeier E (2011) Bipartite ranking through
  minimization of univariate loss. In: Proc. International Conference on
  Machine Learning

\bibitem[{Lafferty and Zhai(2001)}]{Lafferty01documentlanguage}
Lafferty J, Zhai C (2001) Document language models, query models, and risk
  minimization for information retrieval. In: Proc. 24th Annual International
  ACM SIGIR Conference on Research and Development in Information Retrieval, pp
  111--119

\bibitem[{Le et~al(2010)Le, Smola, Chapelle, and Teo}]{Le10}
Le QV, Smola A, Chapelle O, Teo CH (2010) Optimization of ranking measures.
  Journal of Machine Learning Research pp 1--48

\bibitem[{Li et~al(2007)Li, Burges, and Wu}]{Li_mcrank:learning}
Li P, Burges CJ, Wu Q (2007) Mc{R}ank: Learning to rank using multiple
  classification and gradient boosting. In: Advances in Neural Information
  Processing Systems (NIPS), pp 845--852

\bibitem[{Li et~al(2013)Li, Zhang, Wang, Chen, Whiffin, Taib, Vicky, and
  Wang}]{LiEtAl13}
Li Z, Zhang MB, Wang Y, Chen F, Whiffin V, Taib R, Vicky W, Wang Y (2013) Water
  pipe condition assessment: A hierarchical beta process approach for sparse
  incident data. Machine Learning, Special Issue on ML for Science and Society

\bibitem[{Matveeva et~al(2006)Matveeva, Laucius, Burges, Wong, and
  Burkard}]{Matveeva06highaccuracy}
Matveeva I, Laucius A, Burges C, Wong L, Burkard T (2006) High accuracy
  retrieval with multiple nested ranker. In: Proc. 29th Annual International
  ACM SIGIR Conference on Research and Development in Information Retrieval, pp
  437--444

\bibitem[{Menon et~al(2013)Menon, Jiang, Kim, Vaidya, and
  Ohno-Machado}]{MenonEtAl13}
Menon AK, Jiang X, Kim J, Vaidya J, Ohno-Machado L (2013) Detecting
  inappropriate access to electronic health records using collaborative
  filtering. Machine Learning, Special Issue on ML for Science and Society

\bibitem[{Metz(1978)}]{Metz78}
Metz CE (1978) Basic principles of {ROC} analysis. Seminars in Nuclear Medicine
  8(4):283--298

\bibitem[{Oza et~al(2009)Oza, Castle, and Stutz}]{ozaEtAl09}
Oza N, Castle JP, Stutz J (2009) Classification of aeronautics system health
  and safety documents. IEEE Transactions on Systems, Man and Cybernetics, Part
  C 39:1--11

\bibitem[{Perlich et~al(2003)Perlich, Provost, and Simonoff}]{Perlich03}
Perlich C, Provost F, Simonoff JS (2003) Tree induction vs. logistic
  regression: A learning-curve analysis. Journal of Machine Learning Research
  4:211--255

\bibitem[{Potash et~al(2015)Potash, Brew, Loewi, Majumdar, Reece, Walsh,
  Rozier, Jorgenson, Mansour, and Ghani}]{Ghani2015}
Potash E, Brew J, Loewi A, Majumdar S, Reece A, Walsh J, Rozier E, Jorgenson E,
  Mansour R, Ghani R (2015) Predictive modeling for public health: Preventing
  childhood lead poisoning. In: Proc. 21th ACM SIGKDD International Conference
  on Knowledge Discovery and Data Mining (KDD)

\bibitem[{Putter(1955)}]{Putter55}
Putter J (1955) The treatment of ties in some nonparametric tests. The Annals
  of Mathematical Statistics 26(3):368--386

\bibitem[{Qin et~al(2013)Qin, Liu, and Li}]{QinEtAl10}
Qin T, Liu T, Li H (2013) A general approximation framework for direct
  optimization of information retrieval measures. Information Retrieval
  13(4):375--397

\bibitem[{Rajaram and Agarwal(2005)}]{Rajaram05}
Rajaram S, Agarwal S (2005) Generalization bounds for $k$-partite ranking. In:
  NIPS 2005 Workshop on Learning to Rank

\bibitem[{Rudin(2009{\natexlab{a}})}]{Rudin09}
Rudin C (2009{\natexlab{a}}) The {P}-{N}orm {P}ush: A simple convex ranking
  algorithm that concentrates at the top of the list. Journal of Machine
  Learning Research 10:2233--2271

\bibitem[{Rudin(2009{\natexlab{b}})}]{RudinFlex}
Rudin C (2009{\natexlab{b}}) {ROC} {F}lexibility {D}ata.
  \texttt{https://users.cs.duke.edu/}\\\texttt{$\sim$cynthia/code/ROCFlexibilityData.html}

\bibitem[{Rudin and Schapire(2009)}]{RudinSc09}
Rudin C, Schapire RE (2009) Margin-based ranking and an equivalence between
  {A}da{B}oost and {R}ank{B}oost. Journal of Machine Learning Research
  10:2193--2232

\bibitem[{Rudin et~al(2010)Rudin, Passonneau, Radeva, Dutta, Ierome, and
  Isaac}]{Rudin10}
Rudin C, Passonneau R, Radeva A, Dutta H, Ierome S, Isaac D (2010) A process
  for predicting manhole events in {M}anhattan. {M}achine {L}earning 80:1--31

\bibitem[{Rudin et~al(2012)Rudin, Waltz, Anderson, Boulanger, Salleb-Aouissi,
  Chow, Dutta, Gross, Huang, Ierome, Isaac, Kressner, Passonneau, Radeva, and
  Wu}]{RudinEtAl12}
Rudin C, Waltz D, Anderson RN, Boulanger A, Salleb-Aouissi A, Chow M, Dutta H,
  Gross P, Huang B, Ierome S, Isaac D, Kressner A, Passonneau RJ, Radeva A, Wu
  L (2012) Machine learning for the {N}ew {Y}ork {C}ity power grid. {IEEE}
  Transactions on Pattern Analysis and Machine Intelligence 34(2):328--345

\bibitem[{Savage(1957)}]{Savage57}
Savage IR (1957) Nonparametric statistics. Journal of the American Statistical
  Association 52(279):331--344

\bibitem[{Shen and Joshi(2003)}]{ShenJo03}
Shen L, Joshi AK (2003) An {SVM} based voting algorithm with application to
  parse reranking. In: Proc. HLT-NAACL 2003 workshop on Analysis of Geographic
  References, pp 9--16

\bibitem[{Tamhane and Dunlop(2000)}]{Tamhane00}
Tamhane AC, Dunlop DD (2000) Statistics and data analysis. Prentice Hall

\bibitem[{Tan et~al(2013)Tan, Xia, Guo, and Wang}]{tan2013direct}
Tan M, Xia T, Guo L, Wang S (2013) Direct optimization of ranking measures for
  learning to rank models. In: Proc. ACM SIGKDD International Conference on
  Knowledge Discovery and Data Mining (KDD), pp 856--864

\bibitem[{Wackerly et~al(2002)Wackerly, III, and Scheaffer}]{Wackerly02}
Wackerly DD, III WM, Scheaffer RL (2002) Mathematical statistics with
  applications. Duxbury

\bibitem[{Wang et~al(2013)Wang, Wang, Li, He, and Liu}]{pmlr-v30-Wang13}
Wang Y, Wang L, Li Y, He D, Liu TY (2013) A theoretical analysis of ndcg type
  ranking measures. In: Proc. 26th Annual Conference on Learning Theory
  {(COLT)}, PMLR, vol~30, pp 25--54

\bibitem[{Xia et~al(2008)Xia, Liu, Wang, Zhang, and Li}]{xia2008listwise}
Xia F, Liu TY, Wang J, Zhang W, Li H (2008) Listwise approach to learning to
  rank: theory and algorithm. In: Proc. ICML, ACM, pp 1192--1199

\bibitem[{Xu(2007)}]{Xu07aboosting}
Xu J (2007) A boosting algorithm for information retrieval. In: Proc. 30th
  Annual International ACM SIGIR Conference on Research and Development in
  Information Retrieval

\bibitem[{Xu and Li(2007)}]{xu2007adarank}
Xu J, Li H (2007) Adarank: a boosting algorithm for information retrieval. In:
  Proc. ACM SIGIR, ACM, pp 391--398

\bibitem[{Yue et~al(2007)Yue, Finley, Radlinski, and Joachims}]{yue2007support}
Yue Y, Finley T, Radlinski F, Joachims T (2007) A support vector method for
  optimizing average precision. In: Proc. 30th International ACM SIGIR
  Conference on Research and Development in Information Retrieval, pp 271--278

\end{thebibliography}

\appendix
\section{Appendix}

\subsection{Formulation to Maximize Regularized AUC}\label{AUCopt}
Again we want to have $z_{ik}=1$ if $\w^T\x_i>\w^T\x_k$ and  $z_{ik}=0$ otherwise. We want to maximize the sum of the $z_{ik}$'s which is the number of correctly ranked positive-negative pairs. If $\w^T\x_i-\w^T\x_k\leq \varepsilon$ then it is not considered to be correctly ranked. So we need to impose that $z_{ik}$ is 0 when $\w^T\x_i-\w^T\x_k- \varepsilon\leq 0$; that is, when 1 plus this quantity is less than 1, $z_{ik}$ is 0. Thus, we impose \[z_{ik}\leq 1+\w^T\x_i-\w^T\x_k-\varepsilon.\]
Regularization is included as usual. The formulation is:
\begin{align}
\label{auc_formulation}\notag
\max_{\w,\gamma_j,z_{ik}\forall j,i,k} \quad &\sum_{i\in S_+}\sum_{k\in S_-} z_{ik}-C\sum_j\gamma_j\\\notag
\text{s.t.}\quad &z_{ik}\leq \w^T(\x_i-\x_k)+1-\varepsilon, \quad\forall i\in S_+,k\in S_-,\\\notag
&\gamma_j\geq w_j, \quad\forall j=1,\dots,d,\\\notag
&\gamma_j\geq -w_j, \quad\forall j=1,\dots,d,\\\notag
&-1\leq w_j\leq 1, \quad\forall j=1,\dots,d,\\\notag
&z_{ik},\gamma_j\in \{0,1\}, \quad\forall i\in S_+,k\in S_-, j\in\{1,...d\}.
\end{align}


\subsection{Ranking for the General Pairwise Preference Case}
\label{Pairwiseopt}
RankBoost \citep{Freund03} was designed to handle any pairwise preference information. Here we present an exact, regularized version of RankBoost's objective. Define the labels as $\pi(\x_i,\x_k)=\pi_{ik}$, where $\pi_{ik}$ is 1 if $\x_i$ should be ranked higher than $\x_k$. If $\pi_{ik}=0$ there is no information about the relative ranking of $i$ to $k$.
Then we try to maximize the number of pairs for which the model is able to rank $\x_i$ above $\x_k$ and for which the label for the pair is $\pi_{ik}=1$:
\begin{displaymath}
\textrm{NumAgreedPairs}=\sum_{i=1}^n\sum_{k=1}^n \pi_{ik}\mathbf{1}_{[f(\x_i)>f(\x_k)]}.
\end{displaymath}
We will maximize a regularized version of this, as follows:
\begin{align*}
\max_{\w,\gamma_j,z_{ik}\forall j,i,k} \quad &\sum_{i=1}^n\sum_{k=1}^n \pi_{ik}z_{ik}-C\sum_j\gamma_j\\
\text{s.t.}\quad &z_{ik}\leq \w^T(\x_i-\x_k)+1-\varepsilon, \quad\forall i,k=1,\dots,n,\\
&-1\leq w_j\leq 1, \quad\forall j=1,\dots,d,\\
&\gamma_j\geq w_j, \quad\forall j=1,\dots,d,\\
&\gamma_j\geq -w_j, \quad\forall j=1,\dots,d,\\
&z_{ik},\gamma_j\in \{0,1\}, \quad\forall i,k=1,\dots,n,\;\; j\in\{1,...d\}.
\end{align*}

By special choices of $\pi$, the pairwise rank statistic can be made to include multipartite ranking~\citep{Rajaram05}, which can be similar to ordinal regression. In this case we have several classes, where observations in one class should be ranked above (or below) all the observations in another class.
\begin{displaymath}
\pi_{ik}=\begin{cases}1 \quad\text{if observations in Class$(\x_i)$ should be ranked above observations in Class$(\x_k)$},\\0 \quad\text{otherwise.}\end{cases}
\end{displaymath}
If there are only two classes, then we are back to the AUC or equivalently the WRS statistic.

\subsection{Experimental Results}

\begin{table}
\centering
\caption{Detailed experimental results on ROC Flexibility}
\scalebox{0.8}{
\begin{tabular}{cccccccccccccccc}
\hline\hline
& & & \multicolumn{10}{c}{Runs}& & \multicolumn{2}{c}{Statistics}\\
\cline{4-13}
\cline{15-16}
\multicolumn{3}{c}{Algorithm}& 1& 2& 3& 4& 5& 6& 7& 8& 9& 10& & Mean& Std. Dev.\\
\hline
\multirow{4}{*}{\rotatebox{90}{Baseline methods~~}}&
\multirow{2}{*}{LR}& train&29.12&29.65&31.02&32.34&31.11&30.93&32.27&33.91&32.84&28.95&&31.21&1.65\\
& & test&31.52&32.16&33.13&28.64&29.73&31.98&31.59&30.39&30.92&33.45&&31.35&1.48\\
\cline{2-16}
& \multirow{2}{*}{SVM}& train&29.19&29.74&30.83&32.15&31.27&30.84&32.26&32.34&32.81&27.97&&30.94&1.57\\
& & test&31.63&32.34&33.09&28.93&29.39&31.94&31.54&28.49&30.92&32.76&&31.10&1.63\\
\cline{2-16}
& \multirow{2}{*}{RB}& train&28.34&27.53&28.04&31.12&30.02&28.63&28.04&30.82&30.14&27.36&&29.00&1.39\\
& & test&30.39&30.91&29.98&27.37&28.89&29.69&29.50&27.84&28.90&32.23&&29.57&1.43\\
\cline{2-16}
& \multirow{2}{*}{P-norm Push}& train&29.12&29.64&30.85&32.46&31.37&30.93&32.27&33.75&32.81&30.10&&31.33&1.48\\
& & test&31.52&32.14&33.01&29.08&29.39&32.00&31.59&30.25&30.91&34.36&&31.43&1.61\\
\cline{1-16}
\multirow{4}{*}{\rotatebox{90}{MIO-based methods~}}&
\multirow{2}{*}{$K=50$}& train&31.35&30.60&31.02&33.74&32.78&30.93&32.27&33.91&32.84&30.20&&31.96&1.32\\
& & test&33.10&33.44&33.13&30.92&31.70&31.98&31.59&30.39&30.92&34.45&&32.16&1.31\\
\cline{2-16}
& \multirow{2}{*}{$K=100$}& train&31.44&30.65&30.37&33.82&32.52&30.93&31.71&33.88&32.84&30.28&&31.84&1.36\\
& & test&33.12&33.52&32.59&31.01&31.48&31.98&31.29&30.45&30.92&34.51&&32.09&1.31\\
\cline{2-16}
& \multirow{2}{*}{$K=150$}&train &30.63&30.69&31.06&32.78&32.13&31.27&32.27&32.95&33.00&29.78&&31.65&1.12\\
& & test&32.14&33.55&33.19&29.32&30.67&32.33&31.59&29.40&31.10&34.13&&31.74&1.65\\
\cline{2-16}
& \multirow{2}{*}{Full MIO}& train&26.92&27.15&27.04&28.14&30.90&30.99&26.41&30.12&26.96&29.69&&28.43&1.80\\
& & test&28.76&29.08&29.16&25.75&28.57&31.95&28.34&27.34&26.74&33.92&&28.96&2.40\\
\hline\hline
\end{tabular}
}
\end{table}

\begin{table}[htbp]
\centering
\caption{Detailed experimental results on Abalone19}
\scalebox{0.8}{
\begin{tabular}{cccccccccccccccc}
\hline\hline
& & & \multicolumn{10}{c}{Runs}& & \multicolumn{2}{c}{Statistics}\\
\cline{4-13}
\cline{15-16}
\multicolumn{3}{c}{Algorithm}& 1& 2& 3& 4& 5& 6& 7& 8& 9& 10& & Mean& Std. Dev.\\
\hline
\multirow{4}{*}{\rotatebox{90}{Baseline methods~~}}&
\multirow{2}{*}{LR}&train&3.50&3.56&3.92&3.69&3.56&3.94&2.66&4.19&3.92&3.35&&3.63&0.43\\
&&test&3.01&2.91&3.03&3.14&2.75&3.32&3.76&2.22&2.53&2.94&&2.96&0.42\\
\cline{2-16}
&\multirow{2}{*}{SVM}&train&3.46&3.43&3.46&3.39&3.51&2.99&2.46&4.23&3.85&3.37&&3.41&0.47\\
&&test&2.94&2.80&3.22&3.18&2.78&3.32&4.22&2.31&2.49&2.92&&3.02&0.53\\
\cline{2-16}
&\multirow{2}{*}{RB}&train&3.73&3.91&2.56&3.32&3.51&2.69&2.34&4.04&3.87&4.06&&3.40&0.65\\
&&test&2.47&2.49&2.76&2.89&2.60&3.08&3.66&1.92&2.13&2.62&&2.66&0.49\\
\cline{2-16}
&\multirow{2}{*}{P-norm Push}&train&3.44&3.52&3.74&3.40&3.45&3.04&2.44&4.20&3.80&3.33&&3.44&0.47\\
&&test&2.97&2.79&3.05&3.08&2.77&3.23&4.33&2.65&2.55&2.86&&3.03&0.50\\
\hline
\multirow{4}{*}{\rotatebox{90}{MIO-based methods~}}&
\multirow{2}{*}{$K=250$}&train&5.24&4.52&4.82&4.86&5.37&4.58&3.84&5.57&5.69&4.44&&4.89&0.58\\
&&test&2.88&3.25&2.99&3.33&3.15&3.22&3.41&2.37&2.26&3.91&&3.08&0.49\\
\cline{2-16}
&\multirow{2}{*}{$K=500$}&train&4.49&3.72&4.70&4.60&5.10&4.01&3.66&5.04&4.68&4.49&&4.45&0.50\\
&&test&2.85&2.87&2.89&3.12&2.50&3.11&3.36&2.20&2.70&3.26&&2.89&0.35\\
\cline{2-16}
&\multirow{2}{*}{$K=750$}&train&3.79&3.55&4.75&3.94&4.95&3.90&3.57&4.83&3.74&4.27&&4.13&0.53\\
&&test&2.64&2.96&2.80&2.90&2.61&3.25&3.14&1.92&2.47&2.96&&2.76&0.38\\
\cline{2-16}
&\multirow{2}{*}{Full MIO}&train&2.60&2.37&2.29&2.54&2.66&2.14&2.00&3.11&2.79&2.89&&2.54&0.35\\
&&test&2.22&2.47&2.47&2.98&2.07&2.78&3.24&1.66&1.99&2.32&&2.42&0.48\\
\hline\hline
\end{tabular}
}
\end{table}

\begin{table}[htbp]
\centering
\caption{Detailed experimental results on UIS}
\scalebox{0.8}{
\begin{tabular}{cccccccccccccccc}
\hline\hline
& & & \multicolumn{10}{c}{Runs}& & \multicolumn{2}{c}{Statistics}\\
\cline{4-13}
\cline{15-16}
\multicolumn{3}{c}{Algorithm}& 1& 2& 3& 4& 5& 6& 7& 8& 9& 10& & Mean& Std. Dev.\\
\hline
\multirow{4}{*}{\rotatebox{90}{Baseline methods~~}}&
\multirow{2}{*}{LR}& train&18.94&18.64&17.99&19.21&18.50&17.69&21.46&19.38&20.10&16.69&&18.86&1.32\\
& &test&17.11&17.84&17.11&16.87&18.72&17.99&17.03&18.15&17.40&20.57&&17.88&1.11\\
\cline{2-16}
&\multirow{2}{*}{SVM}&train&18.37&17.83&17.67&18.39&18.20&17.64&21.17&19.13&20.00&16.18&&18.46&1.38\\
& &test&16.91&17.62&17.32&16.78&18.73&17.71&16.91&18.12&17.18&20.80&&17.81&1.21\\
\cline{2-16}
&\multirow{2}{*}{RB}&train&20.40&19.13&19.59&20.40&18.46&18.31&21.97&19.30&20.10&16.69&&19.44&1.44\\
& &test&16.18&17.77&18.33&16.30&19.88&17.81&15.64&18.00&17.39&19.67&&17.70&1.40\\
\cline{2-16}
&\multirow{2}{*}{P-norm Push}&train&18.93&17.92&17.93&19.18&18.49&17.68&21.55&19.42&20.10&16.60&&18.78&1.40\\
& &test&17.09&17.82&17.19&16.86&18.73&17.96&17.04&18.22&17.42&20.61&&17.89&1.13\\
\cline{1-16}
\multirow{4}{*}{\rotatebox{90}{MIO-based methods~}}&
\multirow{2}{*}{$K=50$}&train&20.53&20.23&20.52&21.13&19.34&20.32&22.43&20.90&21.25&17.81&&20.45&1.23\\
& &test&19.40&19.00&16.57&17.94&18.65&19.73&17.15&16.56&16.14&18.87&&18.00&1.31\\
\cline{2-16}
&\multirow{2}{*}{$K=100$}&train&20.70&19.66&19.92&20.20&18.33&19.64&21.78&20.04&20.26&17.11&&19.76&1.27\\
& &test&18.94&18.69&18.06&18.00&20.60&19.48&15.42&18.93&17.67&20.58&&18.64&1.51\\
\cline{2-16}
&\multirow{2}{*}{$K=150$}&train&19.91&18.97&19.37&19.53&18.02&19.79&20.32&19.80&20.01&16.90&&19.26&1.05\\
& &test&17.59&17.24&18.68&17.11&18.47&17.86&15.64&17.01&16.34&22.00&&17.79&1.73\\
\cline{2-16}
&\multirow{2}{*}{Full MIO}&train&19.06&18.99&18.10&19.90&17.39&18.89&21.68&19.03&19.11&16.24&&18.84&1.44\\
& &test&18.47&17.07&18.08&18.03&18.42&18.42&17.02&17.38&17.16&18.82&&17.89&0.67\\
\hline\hline
\end{tabular}
}
\end{table}

\begin{table}[htbp]
\centering
\caption{Detailed experimental results on Travel}
\scalebox{0.8}{
\begin{tabular}{cccccccccccccccc}
\hline\hline
& & & \multicolumn{10}{c}{Runs}& & \multicolumn{2}{c}{Statistics}\\
\cline{4-13}
\cline{15-16}
\multicolumn{3}{c}{Algorithm}& 1& 2& 3& 4& 5& 6& 7& 8& 9& 10& & Mean& Std. Dev.\\
\hline
\multirow{4}{*}{\rotatebox{90}{Baseline methods~~}}&
\multirow{2}{*}{LR}&train&31.48&26.63&27.22&28.60&29.25&28.44&27.94&28.70&25.55&27.81&&28.16&1.60\\
& &test&23.94&28.69&28.34&26.98&26.76&25.57&27.88&26.95&30.02&28.03&&27.32&1.70\\
\cline{2-16}
&\multirow{2}{*}{SVM}&train&31.08&26.13&26.75&28.23&28.58&27.17&27.29&28.26&25.12&27.29&&27.59&1.61\\
& &test&23.41&27.74&27.84&26.93&26.37&24.56&27.22&26.58&29.65&27.79&&26.81&1.76\\
\cline{2-16}
&\multirow{2}{*}{RB}&train&29.83&25.00&25.31&27.01&27.70&27.28&26.72&26.63&24.09&26.20&&26.57&1.60\\
& &test&21.93&26.46&26.35&24.33&23.36&24.00&25.20&25.01&27.48&25.40&&24.95&1.63\\
\cline{2-16}
&\multirow{2}{*}{P-norm Push}&train&31.52&26.62&27.12&28.49&29.26&28.37&27.93&28.49&25.52&27.61&&28.09&1.62\\
& &test&24.05&28.39&28.25&26.80&26.78&25.51&27.86&26.74&30.00&28.01&&27.24&1.66\\
\hline
\multirow{4}{*}{\rotatebox{90}{MIO-based methods~}}&
\multirow{2}{*}{$K=50$}&train&31.69&26.91&27.18&28.73&29.31&28.74&28.19&28.83&25.58&27.87&&28.30&1.63\\
& &test&24.26&29.20&28.48&27.05&26.88&26.42&28.15&26.97&30.46&28.24&&27.61&1.70\\
\cline{2-16}
&\multirow{2}{*}{$K=100$}&train&31.19&27.05&27.01&28.81&29.62&28.75&27.65&28.53&25.54&28.23&&28.24&1.56\\
& &test&24.15&29.63&28.46&27.01&26.31&27.15&26.55&27.48&29.39&27.81&&27.39&1.60\\
\cline{2-16}
&\multirow{2}{*}{$K=150$}&train&28.76&26.28&25.39&27.95&29.24&27.50&26.90&28.28&24.99&25.93&&27.12&1.45\\
& &test&20.99&28.16&26.59&26.53&26.21&23.86&25.99&26.78&29.19&25.68&&26.00&2.26\\
\cline{2-16}
&\multirow{2}{*}{Full MIO}&train&29.57&26.14&26.84&26.55&28.62&26.38&25.85&27.93&25.08&26.44&&26.94&1.36\\
& &test&23.16&27.82&27.61&25.01&26.30&24.42&26.41&26.01&29.55&26.85&&26.31&1.83\\
\hline\hline
\end{tabular}
}
\end{table}

\begin{table}[htbp]
\centering
\caption{Detailed experimental results on NHANES}
\scalebox{0.8}{
\begin{tabular}{cccccccccccccccc}
\hline\hline
& & & \multicolumn{10}{c}{Runs}& & \multicolumn{2}{c}{Statistics}\\
\cline{4-13}
\cline{15-16}
\multicolumn{3}{c}{Algorithm}& 1& 2& 3& 4& 5& 6& 7& 8& 9& 10& & Mean& Std. Dev.\\
\hline
\multirow{4}{*}{\rotatebox{90}{Baseline methods~~}}&
\multirow{2}{*}{LR}&train&14.50&12.76&16.84&13.89&17.18&15.84&14.62&13.89&15.25&12.14&&14.69&1.63\\
& &test&14.43&14.16&10.31&14.63&10.33&11.78&14.74&13.42&12.39&14.40&&13.06&1.74\\
\cline{2-16}
&\multirow{2}{*}{SVM}&train&13.04&11.42&15.80&12.64&16.63&15.72&13.57&13.17&14.99&11.31&&13.83&1.87\\
& &test&14.33&14.08&10.27&14.63&10.13&11.75&14.86&13.33&12.12&14.27&&12.98&1.79\\
\cline{2-16}
&\multirow{2}{*}{RB}&train&13.05&12.05&16.73&13.35&16.84&15.20&12.93&13.03&13.87&10.40&&13.75&2.01\\
& &test&11.95&13.68&9.78&13.06&9.31&10.67&12.49&12.43&12.67&14.98&&12.10&1.75\\
\cline{2-16}
&\multirow{2}{*}{P-norm Push}&train&14.43&12.53&16.63&13.53&16.12&15.89&14.49&13.74&15.32&11.87&&14.46&1.57\\
& &test&14.28&14.39&10.10&14.76&10.24&11.85&14.88&13.63&13.30&14.41&&13.18&1.82\\
\hline
\multirow{4}{*}{\rotatebox{90}{MIO-based methods~}}&
\multirow{2}{*}{$K=50$}&train&15.32&13.04&17.90&14.96&17.84&17.02&14.95&14.64&15.77&13.35&&15.48&1.69\\
& &test&14.54&14.39&10.38&13.37&11.07&13.33&14.65&12.99&13.11&14.74&&13.26&1.50\\
\cline{2-16}
&\multirow{2}{*}{$K=100$}&train&14.11&11.83&17.52&13.77&18.28&16.48&14.49&13.75&15.04&14.95&&15.02&1.93\\
& &test&13.58&14.25&10.29&12.76&9.83&12.96&13.53&12.27&14.95&12.74&&12.71&1.61\\
\cline{2-16}
&\multirow{2}{*}{$K=150$}&train&14.30&12.49&16.91&14.87&16.40&16.56&14.83&13.89&14.77&12.27&&14.73&1.59\\
& &test&13.83&14.30&10.73&12.75&10.44&11.91&14.86&13.75&12.40&14.38&&12.94&1.55\\
\cline{2-16}
&\multirow{2}{*}{Full MIO}&train&13.36&12.11&15.79&13.48&15.66&14.48&14.17&13.25&14.13&12.21&&13.87&1.25\\
& &test&14.46&14.75&10.42&14.83&10.19&11.76&14.53&13.44&12.19&14.37&&13.09&1.82\\
\hline\hline
\end{tabular}
}
\end{table}

\begin{table}[htbp]
\centering
\caption{Detailed experimental results on Pima}
\scalebox{0.8}{
\begin{tabular}{cccccccccccccccc}
\hline\hline
& & & \multicolumn{10}{c}{Runs}& & \multicolumn{2}{c}{Statistics}\\
\cline{4-13}
\cline{15-16}
\multicolumn{3}{c}{Algorithm}& 1& 2& 3& 4& 5& 6& 7& 8& 9& 10& & Mean& Std. Dev.\\
\hline
\multirow{4}{*}{\rotatebox{90}{Baseline methods~~}}&
\multirow{2}{*}{LR}&train&37.27&38.05&33.65&33.84&34.54&33.02&36.10&36.57&35.96&35.98&&35.50&1.66\\
&&test&32.64&31.77&36.70&36.07&34.88&36.47&33.85&32.22&34.22&32.94&&34.18&1.81\\
\cline{2-16}
&\multirow{2}{*}{SVM}&train&36.69&37.74&33.52&33.59&34.29&32.93&36.08&36.53&35.88&35.71&&35.30&1.61\\
&&test&31.91&31.93&36.73&35.66&34.64&36.41&33.90&32.19&34.24&32.77&&34.04&1.83\\
\cline{2-16}
&\multirow{2}{*}{RB}&train&37.65&36.95&34.29&34.36&35.12&33.27&36.56&36.96&36.29&36.57&&35.80&1.44\\
&&test&32.23&31.17&35.97&36.14&34.26&35.40&33.37&33.17&34.02&32.53&&33.83&1.65\\
\cline{2-16}
&\multirow{2}{*}{P-norm Push}&train&37.39&38.25&33.74&33.99&34.79&33.11&36.20&36.62&36.10&36.26&&35.64&1.67\\
&&test&32.64&31.70&36.67&36.04&35.10&36.65&33.76&32.21&34.20&33.42&&34.24&1.82\\
\hline
\multirow{4}{*}{\rotatebox{90}{MIO-based methods~}}&
\multirow{2}{*}{$K=50$}&train&37.41&38.36&33.74&34.17&34.56&33.30&36.52&36.71&36.31&36.39&&35.75&1.67\\
&&test&32.70&32.18&36.28&36.36&34.91&37.37&33.07&33.86&34.50&33.19&&34.44&1.76\\
\cline{2-16}
&\multirow{2}{*}{$K=100$}&train&37.58&37.99&33.54&33.67&34.75&33.10&35.10&36.30&36.22&36.07&&35.43&1.69\\
&&test&31.41&30.77&33.82&35.93&35.42&35.57&32.34&33.97&33.03&33.31&&33.56&1.87\\
\cline{2-16}
&\multirow{2}{*}{$K=150$}&train&37.25&37.97&33.00&33.53&34.14&31.35&35.35&36.40&35.54&33.99&&34.85&1.96\\
&&test&32.50&31.75&35.93&36.03&35.62&37.05&32.22&32.38&32.98&30.74&&33.72&2.21\\
\cline{2-16}
&\multirow{2}{*}{Full MIO}&train&37.16&37.69&33.78&33.98&34.07&31.99&35.14&36.21&31.71&35.95&&34.77&2.03\\
&&test&32.48&31.85&36.06&35.60&35.02&35.19&33.40&32.50&31.10&33.35&&33.65&2.01\\
\hline\hline
\end{tabular}
}
\end{table}

\begin{table}[htbp]
\centering
\caption{Detailed experimental results on the Gaussians data set}
\scalebox{0.8}{
\begin{tabular}{cccccccccccccccc}
\hline\hline
& & & \multicolumn{10}{c}{Runs}& & \multicolumn{2}{c}{Statistics}\\
\cline{4-13}
\cline{15-16}
\multicolumn{3}{c}{Algorithm}& 1& 2& 3& 4& 5& 6& 7& 8& 9& 10& & Mean& Std. Dev.\\
\hline
\multirow{4}{*}{\rotatebox{90}{Baseline methods~~}}&
\multirow{2}{*}{LR}&train&69.23&68.26&67.17&68.13&75.38&72.75&68.76&67.18&67.24&68.39&&69.25&2.70\\
&&test&64.89&65.62&66.88&65.49&59.25&61.51&64.61&66.58&66.65&65.41&&64.89&2.45\\
\cline{2-16}
&\multirow{2}{*}{SVM}&train&69.30 &68.27 &67.15 &68.19 &75.39 &72.74 &68.90 &67.18 &67.24 &68.38 &&69.28 &2.70\\
&&test&64.92 &65.64 &66.90 &65.59 &59.26 &61.52 &64.81 &66.58 &66.65 &65.41 &&64.73 &2.45\\
\cline{2-16}
&\multirow{2}{*}{RB}&train&71.14&70.53&69.43&70.51&76.05&74.28&70.21&70.04&69.70&71.22&&71.31&2.15\\
&&test&67.58&68.68&68.36&67.81&62.49&64.78&66.91&68.40&69.40&69.36&&67.13&2.06\\
\cline{2-16}
&\multirow{2}{*}{P-norm Push}&train&69.22&68.21&67.19&68.12&75.39&72.75&68.69&67.18&67.26&68.38&&69.24&2.70\\
&&test&64.86&65.45&66.86&65.46&59.26&61.51&64.47&66.57&66.65&65.38&&64.65&2.43\\
\hline
\multirow{4}{*}{\rotatebox{90}{MIO-based methods~}}&
\multirow{2}{*}{$K=50$}&train&71.32&70.56&69.36&71.28&76.57&74.59&71.69&70.16&70.03&71.61&&71.72&2.22\\
&&test&68.57&69.10&70.38&68.45&62.99&65.26&67.82&69.73&69.81&68.22&&68.03&2.27\\
\cline{2-16}
&\multirow{2}{*}{$K=100$}&train&71.27&70.43&69.61&71.39&76.56&74.59&71.61&70.33&70.06&71.81&&71.76&2.18\\
&&test&68.48&69.12&70.25&68.26&63.05&65.11&67.58&69.81&69.82&67.64&&67.91&2.27\\
\cline{2-16}
&\multirow{2}{*}{$K=150$}&train&71.01&70.55&69.08&71.41&76.45&74.54&71.76&69.95&69.41&71.67&&71.58&2.30\\
&&test&67.31&69.37&70.34&68.46&62.55&65.41&68.01&68.38&69.92&68.11&&67.79&2.30\\
\cline{2-16}
&\multirow{2}{*}{Full MIO}&train&62.81&63.50&61.76&65.16&70.67&66.56&62.53&64.68&64.62&64.69&&64.70&2.53\\
&&test&61.28&61.58&62.34&60.50&55.83&55.89&59.76&59.87&61.02&60.81&&59.89&2.26\\
\hline\hline
\end{tabular}
}
\end{table}

\begin{table}[htbp]
\centering
\caption{Detailed experimental results on Haberman Survival}
\scalebox{0.8}{
\begin{tabular}{cccccccccccccccc}
\hline\hline
& & & \multicolumn{10}{c}{Runs}& & \multicolumn{2}{c}{Statistics}\\
\cline{4-13}
\cline{15-16}
\multicolumn{3}{c}{Algorithm}& 1& 2& 3& 4& 5& 6& 7& 8& 9& 10& & Mean& Std. Dev.\\
\hline
\multirow{4}{*}{\rotatebox{90}{Baseline methods~~}}&
\multirow{2}{*}{LR}&train&13.45&11.92&14.94&13.98&13.18&13.08&12.47&12.88&12.28&11.17&&12.94&1.07\\
&&test&11.45&13.79&10.69&12.01&13.02&13.13&13.47&12.84&13.24&14.58&&12.82&1.15\\
\cline{2-16}
&\multirow{2}{*}{SVM}&train&13.43&11.96&15.02&13.83&13.16&13.19&12.52&12.82&12.34&11.18&&12.95&1.06\\
&&test&11.42&13.79&10.70&11.52&13.08&13.06&12.78&12.82&12.58&14.56&&12.63&1.15\\
\cline{2-16}
&\multirow{2}{*}{RB}&train&14.35&13.55&16.48&13.83&13.84&13.81&13.95&14.47&13.89&11.33&&13.95&1.24\\
&&test&11.29&13.08&9.60&11.85&12.13&12.62&12.39&11.33&12.74&13.01&&12.01&1.06\\
\cline{2-16}
&\multirow{2}{*}{P-norm Push}&train&13.44&11.91&14.88&14.00&13.18&13.11&12.50&12.92&12.26&11.17&&12.94&1.06\\
&&test&11.51&13.66&10.72&11.98&12.94&13.14&12.84&12.80&12.24&14.55&&12.64&1.09\\
\hline
\multirow{3}{*}{\rotatebox{90}{MIO-based~~~}}&
\multirow{2}{*}{$K=50$}&train&13.68&11.91&14.92&14.12&12.95&13.00&12.54&14.39&12.49&11.02&&13.10&1.19\\
&&test&10.10&13.82&10.62&11.99&12.92&13.07&13.48&11.88&13.93&14.60&&12.64&1.47\\
\cline{2-16}
&\multirow{2}{*}{$K=100$}&train&14.16&12.13&14.94&14.45&12.98&13.03&11.87&12.85&12.63&11.14&&13.02&1.20\\
&&test&11.65&13.05&10.52&12.24&12.95&13.19&13.67&12.81&13.23&14.66&&12.80&1.13\\
\cline{2-16}
&\multirow{2}{*}{Full MIO}&train&13.90&12.45&14.92&13.92&12.91&13.26&12.53&12.82&12.97&11.63&&13.13&0.92\\
&&test&11.62&13.26&10.60&11.44&12.62&12.82&12.47&12.76&12.61&14.42&&12.46&1.05\\
\hline\hline
\end{tabular}
}
\end{table}

\begin{table}
\centering
\caption{Detailed experimental results on Polypharm}
\scalebox{0.8}{
\begin{tabular}{cccccccccccccccc}
\hline\hline
& & & \multicolumn{10}{c}{Runs}& & \multicolumn{2}{c}{Statistics}\\
\cline{4-13}
\cline{15-16}
\multicolumn{3}{c}{Algorithm}& 1& 2& 3& 4& 5& 6& 7& 8& 9& 10& & Mean& Std. Dev.\\
\hline
\multirow{4}{*}{\rotatebox{90}{Baseline methods~~}}&
\multirow{2}{*}{LR}&train&17.69&20.69&20.52&19.93&18.63&19.69&21.49&17.99&17.35&20.27&&19.43&1.42\\
&&test&18.60&15.65&15.84&16.40&18.60&17.41&14.74&17.82&20.12&17.14&&17.23&1.63\\
\cline{2-16}
&\multirow{2}{*}{SVM}&train&17.57&20.69&20.15&19.85&18.24&19.60&21.46&17.48&17.06&20.26&&19.24&1.53\\
&&test&18.51&16.84&17.54&16.93&18.47&17.34&14.67&19.63&20.11&17.09&&17.71&1.56\\
\cline{2-16}
&\multirow{2}{*}{RB}&train&17.65&20.52&20.10&19.97&17.81&19.75&21.03&16.90&17.15&20.25&&19.11&1.55\\
&&test&18.98&15.59&16.99&16.10&18.51&17.01&14.89&19.35&19.90&16.67&&17.40&1.70\\
\cline{2-16}
&\multirow{2}{*}{P-norm Push}&train&17.30&19.43&19.80&18.71&16.76&19.30&20.53&19.34&18.98&17.16&&18.73&1.25\\
&&test&20.36&17.34&17.75&18.87&19.33&16.96&15.66&17.51&17.95&18.79&&18.05&1.33\\
\hline
\multirow{4}{*}{\rotatebox{90}{MIO-based methods~}}&
\multirow{2}{*}{$K=50$}&train&17.71&19.64&19.95&18.91&19.65&19.48&20.64&19.05&19.37&17.68&&19.21&0.93\\
&&test&20.25&16.30&17.02&16.71&18.18&17.58&15.47&17.55&16.49&18.20&&17.37&1.33\\
\cline{2-16}
&\multirow{2}{*}{$K=100$}&train&17.09&19.29&19.60&18.41&16.89&19.40&20.64&18.07&19.54&17.38&&18.63&1.25\\
&&test&19.88&16.82&17.53&18.39&16.96&16.53&15.69&17.40&17.57&19.13&&17.59&1.25\\
\cline{2-16}
&\multirow{2}{*}{$K=150$}&train&17.11 &17.78 &17.75 &17.77 &16.43 &19.21 &18.99 &18.76 &19.54 &17.05 & &18.04 &1.04  \\
&&test&19.93 &16.15 &13.85 &17.70 &17.55 &17.97 &14.21 &17.81 &16.84 &19.54 & &17.16 &1.99  \\
\cline{2-16}
&\multirow{2}{*}{Full MIO}&train&16.95&18.71&19.40&17.21&16.94&18.35&20.33&18.48&19.26&15.92&&18.16&1.36\\
&&test&20.05&16.50&17.41&17.68&18.73&14.62&15.84&16.92&16.16&18.28&&17.22&1.57\\
\hline\hline
\end{tabular}
}
\end{table}

\begin{table}[htbp]
\centering
\caption{Detailed experimental results on Glow500}
\scalebox{0.8}{
\begin{tabular}{cccccccccccccccc}
\hline\hline
& & & \multicolumn{10}{c}{Runs}& & \multicolumn{2}{c}{Statistics}\\
\cline{4-13}
\cline{15-16}
\multicolumn{3}{c}{Algorithm}& 1& 2& 3& 4& 5& 6& 7& 8& 9& 10& & Mean& Std. Dev.\\
\hline
\multirow{4}{*}{\rotatebox{90}{Baseline methods~~}}&
\multirow{2}{*}{LR}&train&14.66&17.70&18.65&17.44&18.44&17.84&16.95&16.51&16.57&17.87&&17.26&1.16\\
&&test&18.32&15.45&16.06&17.11&14.78&17.68&17.64&16.90&17.45&15.43&&16.68&1.18\\
\cline{2-16}
&\multirow{2}{*}{SVM}&train&13.94&17.13&18.67&16.92&17.57&17.80&16.18&15.41&15.07&16.52&&16.52&1.41\\
&&test&18.48&15.83&15.59&17.11&16.31&17.63&18.21&17.90&18.91&16.59&&17.26&1.15\\
\cline{2-16}
&\multirow{2}{*}{RB}&train&14.34&17.77&18.65&18.12&17.91&17.22&17.94&16.31&16.42&17.65&&17.23&1.25\\
&&test&18.48&16.44&16.77&16.29&16.32&17.71&16.83&17.37&17.20&16.67&&17.01&0.69\\
\cline{2-16}
&\multirow{2}{*}{P-norm Push}&train&14.10&17.67&18.96&17.36&18.52&17.97&16.56&15.66&16.19&17.22&&17.02&1.44\\
&&test&18.35&15.92&15.68&17.56&15.41&18.07&18.65&18.01&18.45&16.28&&17.24&1.27\\
\hline
\multirow{4}{*}{\rotatebox{90}{MIO-based methods~}}&
\multirow{2}{*}{$K=50$}&train&15.62&18.85&19.20&19.02&19.21&18.65&18.30&17.31&16.74&19.12&&18.20&1.24\\
&&test&16.89&15.23&17.18&16.43&17.03&18.33&17.65&17.30&17.15&14.68&&16.79&1.09\\
\cline{2-16}
&\multirow{2}{*}{$K=100$}&train&15.30&18.27&18.51&18.32&19.42&18.30&17.86&16.18&16.93&17.89&&17.70&1.22\\
&&test&17.38&16.82&15.39&17.69&16.18&18.50&19.06&15.74&19.68&14.59&&17.10&1.66\\
\cline{2-16}
&\multirow{2}{*}{$K=150$}&train&14.53&17.85&18.69&17.90&18.81&17.25&16.87&15.85&15.53&17.53&&17.08&1.40\\
&&test&19.26&14.85&15.58&16.73&16.07&17.40&17.27&18.28&16.70&15.65&&16.78&1.33\\
\cline{2-16}
&\multirow{2}{*}{Full MIO}&train&13.51&17.51&17.46&16.46&18.40&16.97&16.71&16.35&16.64&16.94&&16.69&1.28\\
&&test&16.47&15.59&15.74&15.78&15.54&17.26&15.72&18.12&19.36&15.22&&16.48&1.35\\
\hline\hline
\end{tabular}
}
\end{table}

\begin{table}[htbp]
\centering
\caption{Detailed experimental results on ROC Flexibility, with different $c$ and $\varepsilon$ values.}
\scalebox{0.8}{
\begin{tabular}{cccccccccccccccc}
\hline\hline
& & & \multicolumn{10}{c}{Runs}& & \multicolumn{2}{c}{Statistics}\\
\cline{4-13}
\cline{15-16}
\multicolumn{3}{c}{Parameter values}& 1& 2& 3& 4& 5& 6& 7& 8& 9& 10& & Mean& Std. Dev.\\
\hline
\multirow{12}{*}{\rotatebox{90}{$K=100, \varepsilon=10^{-4}$~}}&
\multirow{2}{*}{$c=10^{-1}$}&train&29.12&29.65&31.02&32.34&31.11&30.93&32.27&33.91&32.84&29.95&&31.31&1.52\\
&&test&31.52&32.16&33.13&28.64&29.73&31.98&31.59&30.39&30.92&33.45&&31.35&1.48\\
\cline{2-16}
&\multirow{2}{*}{$c=10^{-2}$}&train&30.35&30.18&27.91&33.80&31.82&30.93&32.27&32.71&32.84&30.28&&31.31&1.72\\
&&test&31.78&32.42&29.64&30.97&30.35&31.98&31.59&28.83&30.92&34.51&&31.30&1.57\\
\cline{2-16}
&\multirow{2}{*}{$c=10^{-3}$}&train&31.44&30.65&30.37&33.82&32.52&30.93&31.71&33.88&32.84&30.28&&31.84&1.36\\
&&test&33.12&33.52&32.59&31.01&31.48&31.98&31.29&30.45&30.92&34.51&&32.09&1.31\\
\cline{2-16}
&\multirow{2}{*}{$c=10^{-4}$}&train&31.50&30.69&31.06&33.14&32.74&30.93&32.27&33.93&32.84&30.35&&31.94&1.20\\
&&test&33.17&33.55&33.19&29.59&31.65&31.98&31.59&30.48&30.92&34.52&&32.06&1.53\\
\cline{2-16}
&\multirow{2}{*}{$c=10^{-5}$}&train&31.50&30.69&31.06&33.86&32.74&30.93&32.27&33.93&32.84&30.35&&32.02&1.30\\
&&test&33.17&33.55&33.19&31.08&31.65&31.98&31.59&30.48&30.92&34.52&&32.21&1.32\\
\cline{2-16}
&\multirow{2}{*}{$c=10^{-6}$}&train&30.63 &30.60 &31.06 &33.86 &32.13 &30.93 &32.27 &33.93 &32.84 &30.35 & &31.86 &1.35 \\
&&test&32.14 &33.55 &33.19 &31.08 &30.67 &31.98 &31.59 &30.48 &30.92 &34.52 & &32.01 &1.35 \\
\hline
\multirow{12}{*}{\rotatebox{90}{$K=100, c=10^{-3}$~}}&
\multirow{2}{*}{$\varepsilon=10^{-1}$}&train&31.45 &30.69 &31.06 &33.05 &30.08 &30.93 &32.27 &33.65 &32.84 &30.24 & &31.62 &1.25 \\
&&test&32.12 &33.55 &33.19 &29.21 &28.21 &31.98 &31.59 &29.62 &30.92 &34.47 & &31.59 &2.07 \\
\cline{2-16}
&\multirow{2}{*}{$\varepsilon=10^{-2}$}&train&30.63 &30.69 &31.06 &33.05 &32.74 &30.93 &32.27 &33.92 &32.84 &30.35 & &31.85 &1.26 \\
&&test&32.14 &33.55 &33.19 &29.11 &31.65 &31.98 &31.59 &30.49 &30.92 &34.52 & &31.91 &1.57 \\
\cline{2-16}
&\multirow{2}{*}{$\varepsilon=10^{-3}$}&train&30.63&30.69&31.06&33.84&32.74&30.93&32.27&33.93&32.84&30.35&&31.93&1.36\\
&&test&32.14&33.55&33.19&31.03&31.65&31.98&31.59&30.48&30.92&34.52&&32.10&1.28\\
\cline{2-16}
&\multirow{2}{*}{$\varepsilon=10^{-4}$}&train&31.44&30.65&30.37&33.82&32.52&30.93&31.71&33.88&32.84&30.28&&31.84&1.36\\
&&test&33.12&33.52&32.59&31.01&31.48&31.98&31.29&30.45&30.92&34.51&&32.09&1.31\\
\cline{2-16}
&\multirow{2}{*}{$\varepsilon=10^{-5}$}&train&31.50&30.69&31.06&33.86&32.74&30.93&32.27&33.93&32.84&30.35&&32.02&1.30\\
&&test&33.17&33.55&33.19&31.08&31.65&31.98&31.59&30.48&30.92&34.52&&32.21&1.32\\
\cline{2-16}
&\multirow{2}{*}{$\varepsilon=10^{-6}$}&train&31.38&30.19&31.06&33.74&31.55&30.93&31.93&33.90&32.84&28.27&&31.58&1.68\\
&&test&33.07&33.04&33.19&30.92&29.61&31.98&31.48&30.41&30.92&32.75&&31.74&1.26\\
\hline\hline
\end{tabular}
}
\end{table}

\begin{table}[htbp]
\centering
\caption{Detailed experimental results on UIS, with different $c$ and $\varepsilon$ values.}
\scalebox{0.8}{
\begin{tabular}{cccccccccccccccc}
\hline\hline
& & & \multicolumn{10}{c}{Runs}& & \multicolumn{2}{c}{Statistics}\\
\cline{4-13}
\cline{15-16}
\multicolumn{3}{c}{Parameter values}& 1& 2& 3& 4& 5& 6& 7& 8& 9& 10& & Mean& Std. Dev.\\
\hline
\multirow{12}{*}{\rotatebox{90}{$K=100, \varepsilon=10^{-4}$~}}&
\multirow{2}{*}{$c=10^{-1}$}&train&18.94&18.64&17.99&19.21&18.50&20.25&21.46&19.38&20.10&17.08&&19.15&1.24\\
&&test&17.11&17.84&17.11&16.87&18.72&18.72&17.03&18.15&17.40&20.44&&17.94&1.11\\
\cline{2-16}
&\multirow{2}{*}{$c=10^{-2}$}&train&19.39&18.86&19.75&19.85&18.56&19.45&21.40&19.32&19.63&17.39&&19.36&1.02\\
&&test&18.21&18.39&17.14&17.84&20.71&16.78&16.05&18.44&17.20&20.70&&18.15&1.54\\
\cline{2-16}
&\multirow{2}{*}{$c=10^{-3}$}&train&20.17&19.92&19.47&19.80&18.81&20.16&21.46&19.82&19.71&17.53&&19.69&1.01\\
&&test&18.44&18.30&17.43&17.23&19.83&18.73&16.15&16.09&16.31&20.67&&17.92&1.57\\
\cline{2-16}
&\multirow{2}{*}{$c=10^{-4}$}&train&20.70&19.66&19.92&20.20&18.33&19.64&21.78&20.04&20.26&17.11&&19.76&1.27\\
&&test&18.94&18.69&18.06&18.00&20.60&19.48&15.42&18.93&17.67&20.58&&18.64&1.51\\
\cline{2-16}
&\multirow{2}{*}{$c=10^{-5}$}&train&20.55&19.03&19.73&20.83&19.00&19.46&22.16&20.02&20.48&17.61&&19.89&1.24\\
&&test&18.84&18.71&16.98&17.49&18.53&18.17&16.67&17.21&16.73&20.12&&17.94&1.12\\
\cline{2-16}
&\multirow{2}{*}{$c=10^{-6}$}&train&20.67&19.95&19.51&19.83&18.88&19.22&21.70&19.76&19.70&17.55&&19.68&1.08\\
&&test&18.41&17.52&16.80&17.90&18.30&19.93&15.38&15.92&16.25&20.96&&17.74&1.76\\
\hline
\multirow{12}{*}{\rotatebox{90}{$K=100, c=10^{-4}$~}}&
\multirow{2}{*}{$\varepsilon=10^{-1}$}&train&20.22 &19.26 &19.37 &20.45 &18.86 &19.90 &21.87 &19.88 &20.08 &17.10 & &19.70 &1.23 \\
&&test&18.78 &17.65 &17.07 &17.80 &18.78 &19.71 &17.22 &18.38 &18.11 &20.53 & &18.40 &1.09 \\
\cline{2-16}
&\multirow{2}{*}{$\varepsilon=10^{-2}$}&train&20.22&19.61&19.64&20.43&18.15&19.54&22.17&19.88&20.04&17.61&&19.73&1.24\\
&&test&18.28&18.56&18.45&17.39&19.47&17.11&16.71&18.42&16.46&19.44&&18.03&1.06\\
\cline{2-16}
&\multirow{2}{*}{$\varepsilon=10^{-3}$}&train&20.97&19.30&20.06&19.65&18.61&20.07&21.35&20.10&20.26&17.63&&19.80&1.09\\
&&test&17.98&18.00&18.01&17.46&20.04&18.67&16.45&17.91&18.26&20.60&&18.34&1.20\\
\cline{2-16}
&\multirow{2}{*}{$\varepsilon=10^{-4}$}&train&20.70&19.66&19.92&20.20&18.33&19.64&21.78&20.04&20.26&17.11&&19.76&1.27\\
&&test&18.94&18.69&18.06&18.00&20.60&19.48&15.42&18.93&17.67&20.58&&18.64&1.51\\
\cline{2-16}
&\multirow{2}{*}{$\varepsilon=10^{-5}$}&train&20.43&19.81&19.53&19.98&18.47&20.09&21.55&19.95&20.32&17.20&&19.73&1.17\\
&&test&17.06&18.74&16.70&17.95&19.08&19.08&15.03&17.83&16.45&20.86&&17.88&1.66\\
\cline{2-16}
&\multirow{2}{*}{$\varepsilon=10^{-6}$}&train&20.16&18.37&19.53&18.15&18.50&18.81&21.46&19.38&20.10&16.32&&19.08&1.40\\
&&test&18.23&18.24&17.66&18.23&18.72&19.51&17.03&18.15&17.40&19.12&&18.23&0.75\\
\hline\hline
\end{tabular}
}
\end{table}

\end{document}